\documentclass{article}

\usepackage[preprint]{neurips_2026}

\usepackage[utf8]{inputenc} % allow utf-8 input
\usepackage[T1]{fontenc}    % use 8-bit T1 fonts
\usepackage{hyperref}       % hyperlinks
\usepackage{url}            % simple URL typesetting
\usepackage{booktabs}       % professional-quality tables
\usepackage{amsfonts}       % blackboard math symbols
\usepackage{nicefrac}       % compact symbols for 1/2, etc.
\usepackage{microtype}      % microtypography
\usepackage{xcolor}         % colors
\usepackage{enumitem}
\usepackage{amsmath, amsthm, amssymb}
\usepackage{mathtools}
\usepackage{graphicx, subcaption}

\theoremstyle{definition}
\newtheorem{definition}{Definition}[section]

\theoremstyle{plain}
\newtheorem{theorem}[definition]{Theorem}
\newtheorem{proposition}[definition]{Proposition}
\newtheorem{lemma}[definition]{Lemma}

\newtheorem{assumption}[definition]{Assumption}
\newtheorem{corollary}[definition]{Corollary}
\newtheorem{conjecture}[definition]{Conjecture}

\theoremstyle{remark}
\newtheorem{remark}[definition]{Remark}

\title{Grokking as Structural Inference: Transformers Need Bayesian Lottery Tickets}

\author{%
	Kai Hidajat \quad Solden Stoll \quad Joseph An \\
	Department of Computer Science \\
	University of Washington\\
	Seattle, WA 98195 \\
	\texttt{\{hidajatk, solden, anjo0\}@uw.edu} \\
}

\begin{document}

\maketitle

\begin{abstract}
	% Introduce grokking and lottery ticket hypothesis --- this is the central framing
	% Results etc.
	Why does a Transformer that has memorized its training set wait thousands of steps before it generalizes? Existing accounts locate this delay in norm minimization, feature emergence, or the late discovery of sparse subnetworks. These explanations capture important parts of the transition, but ignore a constraint unique to attention-based models: if attention discards an informative token, no bounded downstream computation can recover it. We formalize attention as an implicit Bayesian posterior over the task dependency graph and prove that generalization requires two separable conditions: a familiar Goldilocks bound on MLP capacity, coinciding with norm-based theories of grokking, and a novel Bayesian structural condition requiring attention to place sufficient mass on every informative token. This decoupling explains delayed generalization as delayed structural inference. Early in training, the MLP memorizes through unaligned features, drives the cross-entropy loss near zero, and thereby starves attention of structural gradient. Weight decay must then erode memorization before the missing graph becomes learnable, yielding the known inverse-weight-decay delay, which we derive as a structural waiting time. We then prove that this explaining-away delay can be bypassed by a KL-based structural intervention, yielding an inverse-intervention-strength scaling law for the grokking time. Experiments on algorithmic sequence tasks isolate structure from capacity and show that this Bayesian ticket matches or outperforms lottery-ticket transfer.

\end{abstract}

\section{Introduction}\label{sec:intro}
%% Grokking and current causal theories of grokking

Generalization is the central desideratum of machine learning, yet the mechanisms by which it emerges remain only partially understood. Grokking \citep{Power_Burda_etal_2022} makes this tension unusually visible. A network first memorizes its training set, then, after many further epochs, abruptly discovers the rule it seemed to have missed. Once a curiosity of modular arithmetic, the phenomenon has now been observed in sparse parity, group composition, and more realistic settings with natural data \citep{Liu_Michaud_etal_2023,Golechha_2024}. It therefore offers a rare empirical probe for understanding what changes when a model ceases merely to fit and begins to understand.

% The same tension between memorization and generalization arises in large-scale language model training \citep{Nanda_Chan_etal_2023}. The mechanisms that determine whether (or how quickly) a network transitions to a generalizing solution have direct implications for training efficiency, interpretability, and alignment. A principled theory of grokking is therefore a principled theory of a fundamental aspect of deep learning dynamics.

%%  Norm optimization and loss landscape
%%      LU / Goldilocks -- Omnigrok
%%      Any sparse regularizer -- Beyond Euclidian norm
%%      Zero loss manifold -> Dichotomy of implicit bias + Geometry of grokking

%%  Feature learning and transition dynamics
%%      Lazy vs rich regime -- Grokking as the transition from lazy to rich
%%      Feature learning -> emergence -> decoupling -- Information theoretic progress measures
%%      Lazy -> independent feature -> interactive feature -- Li^2

%%  Structural / circuit conditions
%%      Fourier circuits -- Progress measures for mechinterp
%%      Circuit complexity -- Progress measures on real-world
%%      Lottery ticket -- Bridging Lottery ticket

Existing causal theories of grokking generally fall into three families. The first focuses on \textbf{norm optimization and the loss landscape}, arguing that weight decay or other implicit biases drive the network across a zero-loss manifold into a ``Goldilocks zone'' of parameter norms where generalization naturally occurs \citep{Liu_Michaud_etal_2023, Boursier_Pesme_etal_2025, Notsawo_Dumas_etal_2025, Lyu_Jin_etal_2024, Musat_2026}. The second focuses on \textbf{feature learning and transition dynamics}, framing grokking as a phase transition from a lazy, memorizing regime to a rich, feature-learning regime \citep{Kumar_Bordelon_etal_2024, Clauw_Stramaglia_etal_2024, Tian_2025}. The third focuses on \textbf{structural and circuit conditions}, characterizing grokking as the sudden emergence of specific ``winning tickets'' \citep{Minegishi_Iwasawa_etal_2025, Golechha_2024, Nanda_Chan_etal_2023}.

%% Limitations
We argue that all three families are incomplete in the same way: they neglect the \emph{attention distribution} as an independent locus of the generalization condition. A Transformer that spreads attention poorly across task positions cannot generalize, regardless of its MLP norm or feature richness. Generalization requires that attention concentrate specifically on the \emph{informative} positions, i.e. the positions the task's target function actually depends on.

% We formalize this intuition through a connection to \emph{Bayesian structural inference} \citep{Singh_Buckley_2023}: the attention distribution over positions is an implicit posterior over an underlying dependency graph, and generalization requires this posterior to approximate the oracle (ground-truth) dependency structure. This is a \emph{structural alignment} condition, distinct from and independent to norm control.

Our main contributions are as follows:
\begin{enumerate}[label=(\roman*)]
	\item \textbf{The Decoupling Theorem} (\S\ref{sec:decoupling}). We prove that for a single-layer Transformer trained on a structured sequence prediction task, zero test error strictly necessitates a Bayesian structural condition \(\mathcal B_\gamma\) on the attention distribution. Under decoupled max-margin assumptions, we further show that zero test error is \emph{mathematically equivalent} to the intersection of \(\mathcal{B}_\gamma\) and a norm condition on the MLP (\emph{Goldilocks zone}).

	\item \textbf{The Convergence Theorem} (\S\ref{sec:convergence}). We characterize dynamics by which a norm-contractive optimizer drives the attention distribution toward the oracle graph, split across four distinctly characterized phases. We identify the second phase, the \emph{explaining-away plateau}, during which the MLP's memorization suppresses the structural gradient to the attention parameters, as the dominant source of grokking delay. We then provide a bound on this delay in terms of the optimizer's contraction rate and the task's signal gap.

	\item \textbf{KL Acceleration} (\S\ref{sec:convergence}). We show theoretically that augmenting the training loss with a KL divergence penalty toward the oracle attention distribution eliminates the explaining-away plateau entirely. This yields exponential convergence of the attention posterior throughout training, with a theoretical speedup scaling linearly in the effective KL penalty strength.

	\item \textbf{Experimental Validation} (\S\ref{sec:experiments}). We confirm the Decoupling Theorem's predictions on a modular arithmetic benchmark. We include norm-matched experiments that successfully isolate structural alignment from norm effects. We verify the four-phase attention dynamics and demonstrate that our KL intervention can predictably accelerate or prevent grokking.
\end{enumerate}

\section{Related Works}\label{sec:related}

\paragraph{Norm, Feature, and Delay Theories.}
A dominant explanation of grokking is that optimization on the zero-loss manifold drives parameters from a high-norm memorizing solution into a norm-controlled ``Goldilocks zone'' \citep{Liu_Michaud_etal_2023, Boursier_Pesme_etal_2025, Notsawo_Dumas_etal_2025, Lyu_Jin_etal_2024, Musat_2026}. Another theory frames grokking as a transition from lazy memorization to feature learning \citep{Kumar_Bordelon_etal_2024, Clauw_Stramaglia_etal_2024, Tian_2025}. These views correctly identify the capacity and timescale constraints, i.e., weight decay must erode memorization before generalizing structure can emerge. However, we show that even if a norm-controlled MLP has useful features, it still cannot generalize if attention fails to route the informative tokens.

\paragraph{Attention as Structural Inference.}
Sparse attention is a feature-selection mechanism as well as an efficiency device \citep{Katharopoulos_Vyas_etal_2020}. Recent theory formalizes attention as Bayesian structural inference, where the attention distribution behaves like a posterior over relevant context tokens or graph dependencies \citep{Singh_Buckley_2023, Zhang_Frei_etal_2023}. We import this view into grokking: the learned posterior must place sufficient mass on the oracle dependency graph. This connects continuous attention learning to the discrete routing perspective behind program-like Transformer analyses such as RASP \citep{Weiss_Goldberg_etal_2021}.

\paragraph{Structural Circuits and Gradient Starvation.}
Mechanistic accounts identify grokking with the emergence of compact circuits or transferable winning tickets \citep{Nanda_Chan_etal_2023, Minegishi_Iwasawa_etal_2025, Frankle_Carbin_2019}. Other work observes delayed hierarchical structure \citep{Murty_Sharma_etal_2023, DuSell_Chiang_2024} and gradient starvation, where high-capacity downstream components suppress upstream learning \citep{Pezeshki_Kaba_etal_2021}. Our contribution is to make these threads explicit in the attention setting: the winning ticket decomposes into routing and representation components, and the explaining-away plateau is the period during which the MLP has solved the training labels while the attention posterior has not yet learned the task graph.

\section{Preliminaries}\label{sec:definitions}
%% Proof setup and grounding
%%  Defining task and model
%%  Ground definitions in existing sparse attention theory
To understand why a neural network suddenly generalizes after thousands of steps of overfitting, we must track how it learns to separate signal from noise. Recent high-dimensional analyses of Transformers achieve this by explicitly disentangling the network's \emph{routing} mechanism (attention) from its \emph{representation processing} (the MLP) \citep{Barnfield_Cui_etal_2025, Chen_Luo_2025, Ahn_Cheng_etal_2024}. In this section, we formalize the learning environment, the model architecture, and the optimization dynamics that drive the late-stage phase transition.

\subsection{Structured Sequence Prediction Task}\label{sec:prelim_task}

Grokking is typically observed on algorithmic datasets where the underlying rule depends strictly on a few key tokens, while the rest of the sequence acts as padding \citep{Power_Burda_etal_2022}. We formalize this using the sparse-token classification framework recently analyzed by \citet{Barnfield_Cui_etal_2025}.

Let \(\mathcal{V}\) be a finite vocabulary of discrete tokens, and let our inputs be sequences \(s \in \mathcal{V}^T\) drawn from a data distribution \(\mathcal{D}\). The model's objective is to predict a target class \(y = f^*(s) \in \mathcal{Y}\).

\begin{definition}[\(k\)-sparsity]\label{def:k-sparse}
	We say the task has \(k\)-sparse dependencies if there exists a fixed set of \(k\) \emph{informative positions} \(\mathcal{I}^* \subset \{0,\dots,T-1\}\) (where \(k \ll T\)) that entirely determine the target \(y\). Thus, all other positions \(j \notin \mathcal I^*\) are \emph{distractors}, and do not contribute to \(y\).

	For standard algorithmic tasks, the model outputs its prediction at the final position of the sequence, \(q = T-1 \notin \mathcal I^*\) (e.g., the equals sign). We restrict our theoretical analysis to this fixed query position setting for analytical tractability and consistency with common experimental setups; concrete task instantiations are listed in Appendix Table~\ref{tab:tasks}.
\end{definition}

\subsection{Attention as Structural Routing}\label{sec:prelim_model}

To render the optimization dynamics tractable while preserving the core phenomena, we analyze a minimal Transformer consisting of a single-layer, single-head attention module followed by a two-layer ReLU MLP. This abstraction is heavily utilized in recent theory: single-head linear attention preserves the essential optimization landscape of full Transformers \citep{Ahn_Cheng_etal_2024}, and the Attention-MLP stack cleanly captures the representational limits we wish to study \citep{Sanford_Hsu_etal_2023, Oymak_Rawat_etal_2023}.

%%  Instead of inventing "minimal structural sparsity," connect to known results:
%%      Existing work shows attention is naturally \(n^C\)-sparse [Chen et al., 2024]. Our "minimal structural sparsity" corresponds to the case where the sparse support must include all informative positions.
Given an input sequence \(s\), let \(e_j(s) = e(s_j) + p_j \in \mathbb{R}^d\) denote the position-aware embedding of the token at position \(j\). At the query position \(q\), the attention head computes a convex combination over the context window, yielding attention weights \(\alpha\) that reside on the probability simplex \(\Delta^{T-1}\). The restriction of attention weights to the simplex is both standard and physically grounded; recent work proves that attention naturally condenses into \(n^C\)-sparse subsets over the simplex during training \citep{Deng_Song_etal_2025, Zucchet_dAngelo_etal_2025}.

%%  Simplify "Informative Separability"
%%      Consider a weaker but sufficient condition: The subspace spanned by informative position embeddings is linearly separable from the distractor subspace under the value projection.
The attention layer acts as a structural router, pooling the sequence into a single \(d_v\)-dimensional representation \(h(s) = \sum_{j=0}^{T-1} \alpha_j \, W_V e_j(s)\), which is passed to the MLP classifier \(f(h)\). For the MLP to reliably classify \(h(s)\), the routing must map distinct informative tokens to distinct regions of \(\mathbb{R}^{d_v}\). We parameterize this via a \textbf{Feature Separability Margin} \(\Delta_{\text{emb}} > 0\), defined in App.~\ref{app:separability} as the worst-case Euclidean gap between the value vector at any informative token and the centroid of distractor values. Under standard initialization and weight decay, distinct informative tokens maintain a positive margin throughout training.

\subsection{Optimization on the Zero-Loss Manifold}\label{sec:prelim_opt}
The initial memorization phase brings training loss to a near-zero plateau. Thereafter, restructuring of network representations is entirely driven by optimizer implicit biases \citep{Lyu_Jin_etal_2024}. Concretely, weight decay induces a late-stage regime of Riemannian norm minimization constrained to the zero-loss manifold \citep{Boursier_Pesme_etal_2025, Musat_2026, Chen_Luo_2025}, which precisely corresponds to grokking.

To make our convergence proofs agnostic to whether one uses AdamW, decoupled weight decay \citep{Loshchilov_Hutter_2019}, or orthogonalized variants like Muon \citep{Tveit_Remseth_etal_2025}, we abstract the optimizer strictly by its contractive properties.

\begin{definition}[\((\lambda, \bar{\rho})\)-Norm-Contractive Optimizer]\label{def:nco}
	An optimizer acting on a parameter matrix \(W\) is norm-contractive if its iterative update rule satisfies:
	\begin{equation}
		\bigl\| W^{(t+1)} \bigr\|_F^2 \leq (1-\lambda)\bigl\| W^{(t)} \bigr\|_F^2 + \rho_t
	\end{equation}
	where \(\lambda \in (0,1)\) is the effective weight decay rate, and \(\rho_t \geq 0\) is a task-driven gradient growth term that is uniformly bounded (\(\rho_t \leq \bar{\rho} < \infty\)). We prove that SGD, AdamW, Muon, and Lion satisfy this definition under standard gradient boundedness assumptions in Appendix~\ref{app:optimizers}.
\end{definition}

\section{The Decoupling Theorem}\label{sec:decoupling}
%% Proof sketch and implications?
%% Simplifications for the main paper body (defer formal proof to appendix)
Current theories of grokking overwhelmingly treat the phenomenon as a monolithic phase transition governed by the norm of the network's weights \citep{Liu_Michaud_etal_2023, Notsawo_Dumas_etal_2025}. We establish that for attention-based architectures, generalization is strictly decoupled.

\subsection{The Structural and Norm Conditions}

As discussed by \citet{Singh_Buckley_2023}, the attention distribution \(\alpha \in \Delta^{T-1}\) functions as an implicit Bayesian posterior over the sequence's dependency graph. The ground-truth oracle distribution \(\alpha^*\) concentrates mass uniformly across the informative positions: \(\alpha^*_j = 1/k\) for \(j \in \mathcal{I}^*\), and \(0\) otherwise. For the downstream MLP to reliably distinguish inputs, the empirical attention distribution \(\alpha\) must sufficiently approximate this oracle. We formalize this as a per-position mass threshold.

\begin{definition}[Bayesian Structural Condition \(\mathcal{B}_\gamma\)]\label{def:bayesian}
	An attention distribution \(\alpha\) satisfies \(\mathcal{B}_\gamma\) if the probability mass routed to \emph{every} informative position exceeds a threshold \(\gamma > 0\):
	\begin{equation}
		\min_{j \in \mathcal{I}^*} \alpha_j \geq \gamma.
	\end{equation}
\end{definition}

Conversely, the MLP must possess the correct expressive capacity to map the pooled representation \(h(s)\) to the correct target without overfitting.

\begin{definition}[Goldilocks Norm Condition \(\mathcal{N}\)]\label{def:norm_condition}
	The MLP parameters satisfy the norm condition if \(\|\theta_{\text{MLP}}\|_F \in[r_{\min}, r_{\max}]\). As established by \citet{Liu_Michaud_etal_2023}, this bounds the MLP's Lipschitz constant \(L_f \leq L_{\max}\), preventing the network from fitting noise while preserving enough capacity to express the target function.
\end{definition}

\subsection{Necessary and Sufficient Conditions for Grokking}

We now state our first main result: grokking is precisely the event where the training trajectory simultaneously satisfies both the structural condition defined by \(\mathcal{B}_\gamma\) and the capacity condition defined by \(\mathcal{N}\). That is, both conditions are necessary and sufficient for grokking to occur.

To formalize the sufficiency direction, we decouple the attention-based representation learning from the downstream MLP classification dynamics in a \emph{Decoupled Max-Margin Assumption}. This is necessary to avoid the open problem of providing global convergence guarantees for the continuous, joint optimization of a Transformer stack. As formalized in Assumption~\ref{ass:decoupled_max_margin}, we assume that once the attention distribution stabilizes to satisfy \(\mathcal{B}_\gamma\), gradient descent successfully drives the capacity-constrained MLP to a max-margin generalizing solution over the stable features \citep{Chizat_Bach_2018, Du_Lee_etal_2019, Lyu_Li_2020}.

\begin{theorem}[Conditional Decoupling Theorem]\label{thm:decoupling}
	Consider a minimal Transformer trained on a \(k\)-sparse sequence classification task satisfying a Feature Separability Margin \(\Delta_{\text{emb}} > 0\). Let \(\gamma = \frac{c}{L_{\max} \Delta_{\text{emb}}}\) where \(c > 0\) is a task-dependent classification margin and \(L_{\max}\) is the capacity upper bound of the MLP defined by the Goldilocks zone. Under the Decoupled Max-Margin Assumption (Assumption~\ref{ass:decoupled_max_margin}), zero test error (\(\mathcal{L}_{\text{test}}^{0\text{-}1} = 0\)) is achieved \textbf{if and only if} both the Bayesian Structural Condition (\(\mathcal{B}_\gamma\)) and the Goldilocks Norm Condition (\(\mathcal{N}\)) are satisfied.
\end{theorem}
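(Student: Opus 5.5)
The proof splits into a necessity direction, which needs no assumption, and a sufficiency direction, which is where Assumption~\ref{ass:decoupled_max_margin} enters.

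\textbf{Necessity of \(\mathcal{B}_\gamma\).} We argue by contrapositive. Suppose \(\min_{j\in\mathcal{I}^*}\alpha_j < \gamma\), and fix an informative position \(j_0\) with \(\alpha_{j_0}<\gamma\). Since the task is \(k\)-sparse and \(f^*\) genuinely depends on position \(j_0\), there exist two inputs \(s,s'\) that agree off \(j_0\) (distractors included) but have \(f^*(s)\neq f^*(s')\). Only the \(j_0\) term of the pooled representation changes, so
\[
\|h(s)-h(s')\| = \alpha_{j_0}\,\bigl\|W_V e_{j_0}(s)-W_V e_{j_0}(s')\bigr\| .
\]
I would bound the value gap using the Feature Separability Margin from App.~\ref{app:separability}. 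Since distinct informative values each sit at distance roughly \(\Delta_{\text{emb}}\) from the distractor centroid, I would take the gap to be of order \(\Delta_{\text{emb}}\), absorbing constants into \(c\), so that \(\|h(s)-h(s')\|<\gamma\Delta_{\text{emb}}\). Correct classification of both inputs with margin \(c\) forces the logits of the two labels to swap by at least \(c\). This requires \(L_f\,\|h(s)-h(s')\|\ge c\), hence \(L_f > c/(\gamma\Delta_{\text{emb}}) = L_{\max}\). This is exactly the step that fixes \(\gamma=c/(L_{\max}\Delta_{\text{emb}})\).

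\textbf{Necessity of \(\mathcal{N}\).} Two cases follow from the argument above.
\begin{itemize}
\item If \(\|\theta_{\text{MLP}}\|_F>r_{\max}\), then \(\mathcal{N}\) fails. I would use the reading of Definition~\ref{def:norm_condition} in which zero test error on the full distribution \(\mathcal{D}\) is only certified for Lipschitz-bounded classifiers. Taken literally, a large-norm network might still classify correctly, so this direction must be stated relative to the margin-\(c\) notion of correctness.
\item If \(\|\theta_{\text{MLP}}\|_F<r_{\min}\), the MLP cannot express the target at margin \(c\). I would show this with the homogeneity bound \(|f(h)|\lesssim\|\theta\|_F^2\|h\|\), which forces a margin below \(c\).
\end{itemize}

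\textbf{Sufficiency.} Under \(\mathcal{B}_\gamma\), every informative token is displaced in \(h(s)\) by at least \(\gamma\Delta_{\text{emb}}=c/L_{\max}\) relative to the distractor background. The classes are therefore separated in representation space by a margin compatible with an \(L_{\max}\)-Lipschitz classifier. Assumption~\ref{ass:decoupled_max_margin} then says gradient descent reaches the max-margin solution within the Goldilocks ball. Because the support of \(\mathcal{D}\) is finite (\(\mathcal{V}^T\)) and the features are fixed, this solution classifies all of \(\mathcal{D}\) correctly, so the test 0-1 loss is zero.

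\textbf{Main obstacle.} The hardest step is making the necessity direction rigorous. The separability margin is defined against the distractor centroid rather than pairwise, and nonzero attention on other informative positions could, in principle, carry the information about \(j_0\). I would first try the cleanest route: assume the informative token at \(j_0\) varies independently of the other tokens, so that the pairwise argument above goes through exactly. I would then state this independence explicitly as part of \(k\)-sparsity.
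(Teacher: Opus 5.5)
\textbf{Main gap: the first identity ignores renormalization.} Your first displayed identity, $\|h(s)-h(s')\|=\alpha_{j_0}\|W_Ve_{j_0}(s)-W_Ve_{j_0}(s')\|$, holds only if the attention weights are the same on $s$ and $s'$. In the model the theorem concerns, attention is normalized, $\alpha_j(s)=a_j(s)/Z(s)$, and the score $a_j$ depends on the token at position $j$ (e.g.\ $\ell_j=e_q^\top M e_j/\sqrt{d}$). Changing $s_{j_0}$ therefore changes $a_{j_0}$, hence $Z$, hence \emph{every} weight. Two things go wrong as a result. First, $h(s)-h(s')$ picks up the redistribution term $\sum_{j\neq j_0}(\alpha_j(s)-\alpha_j(s'))v_j$, which your argument never controls. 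Second, the hypothesis ``$\alpha_{j_0}<\gamma$'' does not say on which of the two inputs it holds.

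The missing idea is the paper's mass-conservation cancellation. The scores off $j_0$ coincide, so all those weights rescale by one common factor $r=Z(s')/Z(s)$, and normalization forces $(r-1)(1-\alpha_{j_0}(s'))=\alpha_{j_0}(s')-\alpha_{j_0}(s)$. The entire redistribution then collapses onto the centroid $\bar{h}^\perp(s')$:
\[
h(s)-h(s')=\alpha_{j_0}(s)\bigl(v_{j_0}(s)-\bar{h}^\perp(s')\bigr)-\alpha_{j_0}(s')\bigl(v_{j_0}(s')-\bar{h}^\perp(s')\bigr),
\]
which gives $\|h(s)-h(s')\|\le(\alpha_{j_0}(s)+\alpha_{j_0}(s'))\Delta_{\text{emb}}$. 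Your formula is the special case $r=1$, i.e.\ content-independent attention. This renormalization is why $\Delta_{\text{emb}}$ is measured against the distractor centroid. It is not about cross-talk from other informative positions, since those carry identical tokens in $s$ and $s'$. Your proposed independence assumption only secures that the pair $(s,s')$ exists, which the paper takes directly from minimal sparsity; it does nothing about renormalization.

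\textbf{Further problems.} You cannot absorb constants into $c$, because the statement fixes $\gamma=c/(L_{\max}\Delta_{\text{emb}})$. With $\Delta_{\text{emb}}$ defined in Appendix~\ref{app:separability} as a supremum distance to the centroid, the pairwise gap $\|v_{j_0}(s)-v_{j_0}(s')\|$ is only bounded by $2\Delta_{\text{emb}}$. Your chain then yields $L_f>L_{\max}/2$, which is no contradiction. What the Lipschitz argument actually proves is the pair-level condition $\alpha_{j_0}(s)+\alpha_{j_0}(s')\ge\gamma$ for every class-distinct pair differing at $j_0$. The paper states exactly this as the strict content of necessity and reads $\mathcal{B}_\gamma$ as a factor-of-two per-input strengthening of it; you should do the same rather than claim the per-input bound outright.

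In sufficiency you use $\Delta_{\text{emb}}$ as a \emph{lower} bound on displacement, but it is a worst-case upper quantity. The paper's lower bound instead uses the raw separation $\varepsilon_{\text{raw}}$ and the same closed form, rearranged. Taking $\alpha_{j_0}(s)\le\alpha_{j_0}(s')$, it gets $\|h(s)-h(s')\|\ge\alpha_{j_0}(s)\varepsilon_{\text{raw}}-|\alpha_{j_0}(s)-\alpha_{j_0}(s')|\,\|v_{j_0}(s')-\bar{h}^\perp(s')\|$, which it packages as $\gamma\Delta_{\text{emb}}'$. Zero test error then comes from clause (3) of Assumption~\ref{ass:decoupled_max_margin}, not from finiteness of $\mathcal{V}^T$. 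A max-margin fit to the training points need not classify unseen points correctly merely because the features are separable.

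Your handling of $\mathcal{N}$ is at least as careful as the paper's, which only appeals to the norm-based literature for the $r_{\max}$ side and does not argue the $r_{\min}$ side at all.
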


\begin{proof}[Proof sketch]
	We sketch the underlying geometric intuition here, deferring the rigorous functional bounds and tracking of distractor variation to Appendix~\ref{app:proof_decoupling}.

	\textbf{Necessity (\(\mathcal{L}_{\text{test}} = 0 \implies \mathcal{B}_\gamma \land \mathcal{N}\)):}
	If the model achieves zero test error, standard generalization bounds mandate that the MLP cannot reside in the high-norm memorization regime \citep{Liu_Michaud_etal_2023, Notsawo_Dumas_etal_2025}. Thus \(\mathcal{N}\) must hold, meaning the MLP has a strictly bounded Lipschitz constant \(L_f \leq L_{\max}\).

	Now, suppose the structural condition \(\mathcal{B}_\gamma\) fails. Then there exists at least one informative position \(j_0 \in \mathcal{I}^*\) that is ``starved'' of attention, meaning \(\alpha_{j_0} < \gamma\). By the minimal sparsity of the task, there exist two sequences \(s, s'\) belonging to different target classes that differ \emph{only} at position \(j_0\). Because \(\alpha\) operates as a convex combination, the Euclidean distance between their pooled representations is strictly bottlenecked by the attention mass placed on the differing token:
	\[
		\|h(s) - h(s')\|_2 \leq \alpha_{j_0} \Delta_{\text{emb}}.
	\]
	For the MLP to classify these representations into different targets, its outputs must differ by at least some margin \(c\). Because the MLP is \(L_{\max}\)-Lipschitz, we require \(L_{\max} \|h(s) - h(s')\|_2 \geq c\). Substituting our representation bound yields:
	\[
		L_{\max} (\alpha_{j_0} \Delta_{\text{emb}}) \geq c \implies \alpha_{j_0} \geq \frac{c}{L_{\max} \Delta_{\text{emb}}} = \gamma.
	\]
	This is a contradiction. If \(\alpha_{j_0} < \gamma\), the representations physically collapse beyond the MLP's bounded capacity to separate them. Thus, generalization is impossible, and test error remains strictly positive.

	\textbf{Sufficiency (\(\mathcal{B}_\gamma \land \mathcal{N} \implies \mathcal{L}_{\text{test}} = 0\)):}
	If \(\mathcal{B}_\gamma\) holds, the map from the \(k\) informative tokens to the representation space \(h(s)\) is strictly injective, with a minimum separation proportional to \(\gamma \Delta_{\text{emb}}\). To close the sufficiency loop, we invoke the Decoupled Max-Margin Assumption. We assume that once the attention distribution reaches the threshold \(\mathcal{B}_\gamma\) and stabilizes near the oracle configuration (as occurs post-Phase 3, detailed in \S\ref{sec:convergence}), the representations \(h(s)\) act as a fixed, linearly separable feature space. Because \(\mathcal{N}\) is satisfied, the MLP possesses the requisite capacity \(r_{\min}\) to express the underlying function. Conditioned on this stabilized geometry, classical convergence results guarantee that overparameterized MLPs trained with gradient descent will find the max-margin generalizing solution.
\end{proof}

\begin{remark}[Asymmetry of the Theorem]\label{rem:asymmetry}
	The ``if and only if'' nature of Theorem~\ref{thm:decoupling} relies asymmetrically on our assumptions. The \textbf{Necessity} direction (\(\mathcal{L}_{\text{test}}=0 \implies \mathcal{B}_\gamma \land \mathcal{N}\)) follows from the network's architectural Lipschitz bound and a mass-conservation cancellation that holds for any normalized attention with locally-determined logits (softmax, linear ReLU). The \textbf{Sufficiency} direction rigorously holds only under the Decoupled Max-Margin Assumption. Extending max-margin convergence guarantees to the fully coupled, continuous joint optimization of Transformers is a direction for future work.
\end{remark}

\subsection{Implications of Decoupling}

\begin{corollary}[Information-Theoretic Reformulation]\label{cor:kl_necessity}
	Under the conditions of Theorem~\ref{thm:decoupling}, perfect generalization implies a strict upper bound on the Kullback-Leibler divergence between the oracle attention distribution \(\alpha^*\) and the empirical attention distribution \(\alpha\):
	\begin{equation}
		\mathcal{L}_{\text{test}}^{0\text{-}1} = 0 \implies D_{\text{KL}}(\alpha^* \| \alpha) \leq \delta_{\text{nec}} = \log\left(\frac{1}{k\gamma}\right)
	\end{equation}
\end{corollary}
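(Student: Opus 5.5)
By the necessity direction of Theorem~\ref{thm:decoupling}, zero test error forces \(\mathcal{B}_\gamma\): every informative position satisfies \(\alpha_j \geq \gamma\). This holds unconditionally on the Decoupled Max-Margin Assumption, per Remark~\ref{rem:asymmetry}. The corollary should then follow from a direct computation of the KL divergence, since the oracle \(\alpha^*\) is supported exactly on \(\mathcal{I}^*\).

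\textbf{Step 1 (write out the divergence).} Because \(\alpha^*_j = 1/k\) on \(\mathcal{I}^*\) and zero elsewhere, only informative positions contribute:
\[
D_{\text{KL}}(\alpha^* \| \alpha) = \sum_{j \in \mathcal{I}^*} \frac{1}{k} \log \frac{1/k}{\alpha_j} = \frac{1}{k}\sum_{j\in\mathcal{I}^*} \log\frac{1}{k\alpha_j}.
\]
This is finite exactly when \(\alpha_j>0\) on \(\mathcal{I}^*\), which \(\mathcal{B}_\gamma\) guarantees.

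\textbf{Step 2 (monotone bound).} Each summand is decreasing in \(\alpha_j\). Under \(\mathcal{B}_\gamma\), each term is therefore at most \(\log\frac{1}{k\gamma}\). Averaging \(k\) such terms with weight \(1/k\) gives \(D_{\text{KL}}(\alpha^*\|\alpha)\leq \log\frac{1}{k\gamma} = \delta_{\text{nec}}\).

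\textbf{Step 3 (sanity of the constant).} The bound is only informative when \(k\gamma \le 1\). This always holds, because \(\sum_{j\in\mathcal{I}^*}\alpha_j \le 1\) together with \(\alpha_j\ge\gamma\) forces \(k\gamma\le 1\), so \(\delta_{\text{nec}}\ge 0\). When \(k\gamma=1\), \(\mathcal{B}_\gamma\) pins \(\alpha=\alpha^*\) and the divergence is zero, consistent with the bound.

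The only genuine obstacle is inherited rather than new. The corollary rests entirely on the necessity half of Theorem~\ref{thm:decoupling}, in particular on the existence of class-distinguishing pairs differing at a single informative position and on the Lipschitz bound \(L_{\max}\) from \(\mathcal{N}\). Once \(\mathcal{B}_\gamma\) is granted, the argument is just the per-coordinate monotone bound above. A sharper alternative would apply Jensen's inequality via concavity of the logarithm, but Jensen points the wrong way for an upper bound, so the termwise minimum bound is the right tool here.
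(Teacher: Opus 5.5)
Your proposal is correct and follows the paper's proof: the paper likewise takes \(\mathcal{B}_\gamma\) from the necessity direction of Theorem~\ref{thm:decoupling} and bounds each term of \(\frac{1}{k}\sum_{j\in\mathcal{I}^*}\log\frac{1/k}{\alpha_j}\) by \(\log\frac{1}{k\gamma}\), and your check that \(k\gamma\le 1\) is a harmless addition. On the inherited dependence, the appendix rigorously proves only the pair-level condition \(\alpha_{j_0}(s)+\alpha_{j_0}(s')\ge\gamma\), and it says explicitly that it adopts the per-input \(\mathcal{B}_\gamma\) as a strengthening because this corollary needs per-input bounds; so the weak link is there, not in the existence of distinguishing pairs or in the Lipschitz bound.
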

\begin{proof}
	Applying \(\min_{j \in \mathcal{I}^*} \alpha_j \ge \gamma\) to the definition of KL divergence, we have \(D_{\text{KL}}(\alpha^* \| \alpha) = \frac{1}{k} \sum_{j \in \mathcal{I}^*} \log\left(\frac{1/k}{\alpha_j}\right) \le \log\left(\frac{1}{k\gamma}\right)\).
\end{proof}

Theorem~\ref{thm:decoupling} reveals a critical blind spot in purely norm-centric explanations of grokking \citep{Liu_Michaud_etal_2023, Notsawo_Dumas_etal_2025}. It proves that forcing the MLP weights into the optimal parameter shell is \emph{necessary but insufficient}. If the optimizer cannot simultaneously drive the structural routing to satisfy \(\mathcal{B}_\gamma\), the model remains trapped in a state of high test error. In our experiments (\S\ref{sec:experiments}), we explicitly construct norm-matched configurations that fail to generalize precisely because they violate this structural bound.

Furthermore, Theorem~\ref{thm:decoupling} sharpens the Lottery Ticket Hypothesis \citep{Frankle_Carbin_2019} in the Transformer setting. \citet{Minegishi_Iwasawa_etal_2025} recently observed that grokking coincides with the emergence of a sparse generalizing subnetwork (``winning ticket'') that, once identified, can be transferred to accelerate training on fresh initializations. Our decoupling theorem formally mechanizes this observation for Transformers: the winning ticket is not monolithic. It decomposes explicitly into a \emph{structural routing ticket} (an attention configuration satisfying \(\mathcal{B}_\gamma\)) and a \emph{representation ticket} (MLP weights satisfying \(\mathcal{N}\)). We empirically validate that supplying both tickets accelerates grokking faster than the winning tickets used by Minegishi et al. in \S\ref{sec:experiments}.

\section{Dynamics and KL Acceleration}\label{sec:convergence}
%% Convergence of attention under norm-contractive optimization
%% Acceleration and deceleration of attention, independent of regularization and learning rate
%%      \Delta t \propto \frac{1}{\eta \lambda}
%%      Grokfast, grokking at the edge of numerical stability,
%%      lottery ticket, let me grok for you, neural grok
%%
%%  Explicitly compare with recent norm-based theories
%%      Framework explains why norm separation matters: it controls the signal gap \(\mu(t)\).
%%      Their delay law emerges from your Phase 2 duration \(t_c - t_{\text{mem}} \approx \frac{1}{\lambda} \log \frac{\|\theta_{\text{mem}}\|}{\|\theta_{\text{post}}\|}\).
%%
%% Clear plans for empirical validation -> transition

In deep hierarchical networks, representation learning is governed by a dichotomy between early-phase and late-phase implicit biases, causing different layers to learn at vastly different speeds \citep{Lyu_Jin_etal_2024, Kumar_Bordelon_etal_2024}. Because the MLP acts as a direct readout of the features, it can reliably interpolate the training data much faster than the upstream attention head can learn the correct structural routing \citep{Chen_Luo_2025}. This fast memorization triggers severe gradient starvation \citep{Pezeshki_Kaba_etal_2021}, which we formalize as the \emph{explaining-away effect}.

\begin{lemma}[Explaining-Away Effect]\label{lem:explaining_away}
	Let \(\mathcal{L}_{\text{CE}}(t) = -\log \hat{p}_y\) denote the cross-entropy loss for target \(y\), where \(\hat{p} = \text{softmax}(f(h))\). Assuming normalized token embeddings (\(\|e_j\|_2 \leq 1\)), the structural gradient with respect to the attention logit \(\ell_j\) is strictly bounded by:
	\begin{equation}\label{eq:explaining_away}
		\left| \frac{\partial \mathcal{L}_{\text{CE}}}{\partial \ell_j} \right| \leq 2 L_f(t) \|W_V\| \sqrt{2 \mathcal{L}_{\text{CE}}(t)}
	\end{equation}
	where \(L_f(t)\) is the current Lipschitz constant of the MLP at step \(t\). Consequently, as the MLP memorizes the training set (\(\mathcal{L}_{\text{CE}}(t) \to 0\)), the structural gradient explicitly vanishes at a square-root rate, stalling structural inference.
\end{lemma}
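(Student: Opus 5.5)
Our approach is a chain-rule decomposition of the gradient. The attention logits \(\ell_j\) enter the loss only through the softmax weights \(\alpha\), and hence only through the pooled representation \(h = \sum_i \alpha_i W_V e_i\). We factor the gradient as
\[
\frac{\partial \mathcal{L}_{\text{CE}}}{\partial \ell_j} = \Bigl(\frac{\partial \mathcal{L}_{\text{CE}}}{\partial f}\Bigr)^{\!\top} J_f(h)\, \frac{\partial h}{\partial \ell_j},
\]
where \(J_f(h)\) is the Jacobian of the MLP at \(h\). The bound then follows from bounding the three factors separately. Each of the three constants in \eqref{eq:explaining_away} comes from exactly one factor: the factor \(2\) from the softmax-pooling derivative, \(L_f(t)\|W_V\|\) from the network Jacobians, and \(\sqrt{2\mathcal{L}_{\text{CE}}}\) from the cross-entropy gradient.

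\textbf{Step 1 (pooling derivative).} The softmax derivative is \(\partial \alpha_i/\partial \ell_j = \alpha_i(\mathbf 1[i=j] - \alpha_j)\). This gives
\[
\frac{\partial h}{\partial \ell_j} = \alpha_j\, W_V\Bigl(e_j - \sum_i \alpha_i e_i\Bigr).
\]
Since \(\|e_i\|_2 \le 1\), the convex combination also has norm at most \(1\), so the difference has norm at most \(2\). Using \(\alpha_j \le 1\), we get \(\|\partial h/\partial \ell_j\| \le 2\|W_V\|\). This is the same mass-conservation structure used in the necessity proof of Theorem~\ref{thm:decoupling}. We also note that the extra factor \(\alpha_j\) could sharpen the bound, but we do not need it.

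\textbf{Step 2 (MLP Jacobian).} Since \(f\) is \(L_f(t)\)-Lipschitz, the operator norm of \(J_f(h)\) is at most \(L_f(t)\) wherever it is defined. The ReLU MLP is differentiable almost everywhere, and at kinks we would use Clarke subgradients, whose norms obey the same bound.

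\textbf{Step 3 (cross-entropy factor).} With \(\hat p = \text{softmax}(f(h))\), the gradient of the loss with respect to the logits is \(\hat p - e_y\). We then bound
\[
\|\hat p - e_y\|_2^2 = (1-\hat p_y)^2 + \sum_{c\neq y}\hat p_c^2 \le 2(1-\hat p_y)^2 .
\]
Next we use \(1-\hat p_y \le -\log \hat p_y = \mathcal{L}_{\text{CE}}\), which follows from \(\log x \le x-1\). This gives \(\|\hat p - e_y\|_2 \le \sqrt 2\,\mathcal{L}_{\text{CE}}\). That is linear in the loss, which is sharper than the claimed square-root rate. To match the stated form we write \(\sqrt2\,\mathcal{L}_{\text{CE}} \le \sqrt{2\mathcal{L}_{\text{CE}}}\), valid whenever \(\mathcal{L}_{\text{CE}} \le 1\), which is the memorization regime. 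Alternatively, Pinsker's inequality applied to \(\mathrm{KL}(e_y\|\hat p)=\mathcal{L}_{\text{CE}}\) bounds the \(\ell_1\) distance by \(\sqrt{2\mathcal{L}_{\text{CE}}}\), and hence also bounds the \(\ell_2\) norm. The Pinsker route holds for all loss values, so we will present that one to avoid the case split.

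Combining the three factors with Cauchy--Schwarz gives \eqref{eq:explaining_away}. The vanishing conclusion follows immediately provided \(L_f(t)\|W_V\|\) stays bounded as \(\mathcal{L}_{\text{CE}}\to0\). The main obstacle is exactly this proviso. During memorization \(L_f(t)\) may grow, so the statement that structural inference "stalls" really requires a separate control of \(L_f(t)\) and \(\|W_V\|\). We would obtain that control from the norm-contractive bound of Definition~\ref{def:nco}, which caps \(\|W\|_F^2\) by \(\bar\rho/\lambda\) asymptotically, together with the standard product-of-norms bound on the MLP's Lipschitz constant. We will state this caveat explicitly rather than claim unconditional vanishing.
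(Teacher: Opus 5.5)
Your proposal is correct and follows the paper's proof essentially step by step. Both use the chain rule through $h$, the softmax Jacobian giving $\partial h/\partial \ell_j = \alpha_j(W_V e_j - h)$ with norm at most $2\|W_V\|_{\text{op}}$, the bound $\|J_f(h)\|_{\text{op}} \le L_f$, and Pinsker's inequality for $\|\hat p - e_y\|_2 \le \sqrt{2\mathcal{L}_{\text{CE}}}$. Your two side remarks are valid refinements the paper does not make: $\|\hat p - e_y\|_2 \le \sqrt{2}\,\mathcal{L}_{\text{CE}}$ actually gives a linear (faster) vanishing rate, and the ``stalling'' conclusion tacitly requires $L_f(t)\|W_V\|$ to remain bounded.
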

\begin{proof}
	We provide a formal proof of the lemma via Pinsker's inequality in Appendix~\ref{app:explaining_away}.
\end{proof}

\subsection{The Four Phases of Grokking Dynamics}

Under any \((\lambda, \bar{\rho})\)-norm-contractive optimizer (Definition~\ref{def:nco}), where \(\lambda\) is the effective weight decay rate, training a Transformer on a structured sequence prediction task proceeds in four distinct phases:

\begin{enumerate}[label=\textbf{Phase \arabic*:}, leftmargin=*]
	\item \textbf{Fast Memorization (\(t < t_{\text{mem}}\)):} The MLP rapidly fits the data using dense, unaligned attention representations. \(\mathcal{L}_{\text{CE}}\) plummets to near zero.
	\item \textbf{The Explaining-Away Plateau (\(t_{\text{mem}} \leq t < t_c\)):} By Lemma~\ref{lem:explaining_away}, structural learning halts because \(\mathcal{L}_{\text{CE}} \approx 0\). Because the task gradient is negligible, the optimizer acts purely via weight decay, contracting the MLP norm geometrically: \(\|\theta_{\text{MLP}}^{(t)}\|_F \leq \|\theta_{\text{mem}}\|_F (1-\lambda)^{t-t_{\text{mem}}}\).
	\item \textbf{Signal Resurgence (\(t_c \leq t < t_{\text{grok}}\)):} Weight decay finally erodes the MLP's capacity below the \(r_{\max}\) threshold required to sustain rote memorization. The training loss slightly rises, which reactivates the structural gradient (Eq.~\ref{eq:explaining_away}). The attention distribution \(\alpha\) is pushed monotonically toward the oracle \(\alpha^*\).
	\item \textbf{Grokking (\(t \geq t_{\text{grok}}\)):} The attention distribution crosses the necessary threshold (\(\mathcal{B}_\gamma\)) while the MLP norm remains optimal (\(\mathcal{N}\)). Test accuracy sharply transitions to \(100\%\).
\end{enumerate}

The overall grokking delay \(\Delta t_{\text{grok}}\) is heavily dominated by the wait time in Phase 2. Solving the geometric weight decay contraction from Phase 2 for the time required to erode the parameters from their memorization norm down to the Goldilocks capacity shell (\(r_{\max}\)) yields a strict lower bound on the plateau duration:
\begin{equation}
	\Delta t_{\text{plateau}} \geq \frac{2}{\lambda} \log \left(\frac{\|\theta_{\text{mem}}\|_F}{r_{\max}}\right)
\end{equation}
This mathematically grounds the inverse-weight-decay (\(1/\lambda\)) delay laws empirically observed in standard grokking literature \citep{Liu_Michaud_etal_2023, Lyu_Jin_etal_2024}. Structural alignment (Phase 3) cannot reliably begin until this wait is over.

\subsection{Accelerating Grokking via Bayesian Prior Injection}

If the grokking delay is caused by the task gradient vanishing during Phase 2, we can theoretically bypass the entire weight-decay plateau by injecting an unconditional structural gradient. Motivated by our formulation of attention as a Bayesian posterior (Corollary~\ref{cor:kl_necessity}), we augment the training objective with a Kullback-Leibler divergence penalty pulling the empirical attention \(\alpha\) toward the sparse oracle distribution \(\alpha^*\):
\begin{equation}\label{eq:kl_loss}
	\mathcal{L}_{\text{total}} = \mathcal{L}_{\text{CE}} + \beta_{\text{KL}} D_{\text{KL}}(\alpha^* \| \alpha)
\end{equation}

\begin{theorem}[KL Acceleration]\label{thm:kl_acceleration}
	Let \(D_t := D_{\text{KL}}(\alpha^* \| \alpha^{(t)})\) denote the structural divergence at training step \(t\). Under the augmented loss \(\mathcal{L}_{\text{total}}\), Phase 2 (the explaining-away plateau) is eliminated. The attention distribution converges exponentially to the oracle structural condition at a continuous rate bounded by the effective KL penalty strength \(\tilde{\beta}_{\text{KL}} \propto \eta \beta_{\text{KL}}\).

	Consequently, the maximum grokking delay reduces to the time required to cross the critical threshold \(\delta_{\text{nec}}\) from Corollary~\ref{cor:kl_necessity}:
	\begin{equation}\label{eq:delay_kl}
		\Delta t_{\text{grok}}^{\text{KL}} \leq \frac{1}{\tilde{\beta}_{\text{KL}}} \log\left(\frac{D_0}{\delta_{\text{nec}}}\right)
	\end{equation}
	yielding a theoretical speedup ratio \(\Delta t_{\text{grok}} = \mathcal O(1/\tilde{\beta}_{\text{KL}})\) that scales linearly in the KL penalty strength, bypassing the MLP capacity bottleneck.
\end{theorem}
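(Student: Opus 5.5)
The plan is to treat the attention logits \(\ell\) as the relevant parameters and analyze one step of the optimizer on \(\mathcal{L}_{\text{total}}\), splitting the logit gradient into the cross-entropy part and the KL part. The key computation is the KL gradient under softmax attention. With \(\alpha = \mathrm{softmax}(\ell)\) and \(D = D_{\text{KL}}(\alpha^*\|\alpha) = \sum_j \alpha^*_j\log\alpha^*_j - \sum_j \alpha^*_j \log \alpha_j\), we have \(\partial D/\partial \ell_j = \alpha_j - \alpha^*_j\). This is a gradient that does \emph{not} depend on \(\mathcal{L}_{\text{CE}}\) or on \(L_f(t)\). Hence, in contrast to Lemma~\ref{lem:explaining_away}, it cannot be suppressed by memorization. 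This observation alone gives the qualitative claim that Phase 2 disappears: the structural gradient remains of order \(\|\alpha-\alpha^*\|\) regardless of the MLP state.

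To get exponential decay, I will work in continuous time (gradient flow with step \(\eta\)) and compute
\[
\frac{dD_t}{dt} = -\eta\,\beta_{\text{KL}}\,\|\alpha-\alpha^*\|_2^2 \;-\; \eta\,\Bigl\langle \nabla_\ell D,\ \nabla_\ell \mathcal{L}_{\text{CE}}\Bigr\rangle .
\]
The first term must then be lower-bounded in terms of \(D_t\) itself, via a reverse-Pinsker / Łojasiewicz-type inequality. On informative positions, \(\log(\alpha^*_j/\alpha_j)\le (\alpha^*_j-\alpha_j)/\alpha_j\). Restricting to the region \(\min_{j\in\mathcal I^*}\alpha_j\ge a_{\min}>0\), which I will argue is forward-invariant by showing that the KL flow increases any \(\alpha_j\) below \(1/k\), this yields \(D \le C\,\|\alpha-\alpha^*\|_2^2\) with \(C\) depending on \(k\) and \(a_{\min}\). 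That gives \(\dot D \le -\tilde\beta_{\text{KL}} D\) with \(\tilde\beta_{\text{KL}} \propto \eta\beta_{\text{KL}}/C\).

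For the cross term, I bound \(\|\nabla_\ell \mathcal{L}_{\text{CE}}\|\) using Lemma~\ref{lem:explaining_away} by \(2L_f\|W_V\|\sqrt{2\mathcal{L}_{\text{CE}}}\), which is uniformly bounded since \(L_f\le L_{\max}\) in the Goldilocks regime and \(\|W_V\|\) is bounded under Definition~\ref{def:nco}. In Phase 2 it is moreover small. I absorb this term either by assuming \(\beta_{\text{KL}}\) is large relative to that bound (outside a small neighbourhood of \(\alpha^*\)), or by shrinking \(\tilde\beta_{\text{KL}}\) by a constant factor. Grönwall's inequality then gives \(D_t \le D_0 e^{-\tilde\beta_{\text{KL}} t}\).

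Finally, setting \(D_t\le \delta_{\text{nec}}\) gives \(t\le \tilde\beta_{\text{KL}}^{-1}\log(D_0/\delta_{\text{nec}})\), which is \eqref{eq:delay_kl}. Note that Corollary~\ref{cor:kl_necessity} only runs from \(\mathcal{B}_\gamma\) to a KL bound, not the converse needed here to conclude \(\mathcal{B}_\gamma\) from \(D_t\le\delta_{\text{nec}}\). I will therefore either interpret ``crossing the threshold'' as reaching this KL level, or invoke the converse bound \(\alpha_j\ge \frac1k e^{-kD}\), which follows since each summand \(\alpha^*_j\log(\alpha^*_j/\alpha_j)\) is at most \(D\) plus the total mass of negative contributions.

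The main obstacle is the Łojasiewicz step together with the CE cross-term: the KL gradient vanishes as \(\alpha_j\to 0\) for informative \(j\) in relative terms, so the constant \(C\) blows up near the simplex boundary. I would first try to establish forward invariance of \(\{\min_{\mathcal I^*}\alpha_j \ge \min_{\mathcal I^*}\alpha^{(0)}_j\}\) under the dominant KL flow, and accept that \(\tilde\beta_{\text{KL}}\) carries an initialization-dependent constant.
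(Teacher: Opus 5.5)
Your skeleton is the paper's: the logit gradient $\alpha_j-\alpha^*_j$, then $\dot D=-\tilde\beta\|\alpha-\alpha^*\|_2^2$, a PL-type inequality, Gr\"onwall, and solving for the threshold time. But the PL step you propose is false, and that step is where the exponential claim lives. The inequality $\log x\le x-1$ gives $D\le\sum_{j\in\mathcal I^*}\alpha^*_j(\alpha^*_j-\alpha_j)/\alpha_j=\sum_{j\in\mathcal I^*}(\alpha^*_j-\alpha_j)^2/\alpha_j+\epsilon$, where $\epsilon=\sum_{j\notin\mathcal I^*}\alpha_j$ is the distractor mass. The distractor mass enters this bound linearly, but it enters $\|\alpha-\alpha^*\|_2^2$ only quadratically. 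Take $\alpha_j=(1-\epsilon)/k$ on $\mathcal I^*$: then $D=-\log(1-\epsilon)\ge\epsilon$ while $\|\alpha-\alpha^*\|_2^2\le(1+1/k)\epsilon^2$. So no constant $C(k,a_{\min})$ exists, even on your invariant region. The obstruction is not informative $\alpha_j\to 0$, as you suspect. It is the oracle itself, which lies on the boundary of the simplex and is reached only as the distractor logits diverge.

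The conclusion itself fails, not just your route to it. In the symmetric configuration (preserved by the flow), the informative--distractor logit gap $u$ obeys $\dot u=\tilde\beta\epsilon(1/k+1/m)$, with $m=T-k$ and $\epsilon\le(m/k)e^{-u}$. Hence $e^{u}$ grows at most linearly in $t$, and $D_t\ge\epsilon_t=\Omega(1/(\tilde\beta t))$, which is not exponential. The paper's Lemma~\ref{lem:local_pl} has the same hole. It needs $\mu=\min_j\alpha^*_j>0$, and the sparse case is dispatched by the remark that distractor mass ``decays separately,'' which is true only at rate $1/t$.

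What survives is an $O(1/\tilde\beta)$ hitting time for the fixed level $\delta_{\text{nec}}$. Read correctly, your inequality is linear: on $\{\min_{\mathcal I^*}\alpha_j\ge a_{\min}\}$ it gives $D\le(\sqrt{k}\,a_{\min})^{-1}\|\alpha-\alpha^*\|_2$. Hence $\dot D\le-\tilde\beta k a_{\min}^2D^2$, and $D_t\le\delta_{\text{nec}}$ once $t\ge(\tilde\beta k a_{\min}^2)^{-1}(\delta_{\text{nec}}^{-1}-D_0^{-1})$. So $\log(D_0/\delta_{\text{nec}})$ is replaced by $1/\delta_{\text{nec}}$, and the exponential-convergence claim must be dropped.

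That region is forward-invariant under the pure KL flow. At the minimizing $j\in\mathcal I^*$ one has $\dot\alpha_j=\tilde\beta\alpha_j(\epsilon/k+\|\alpha\|_2^2-\alpha_j)\ge\tilde\beta\alpha_j\epsilon^2/k$. Your stronger claim, that every $\alpha_j<1/k$ increases, is false.

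Two smaller points. First, the CE cross term is first order in $\|\alpha-\alpha^*\|_2$, while the KL term is second order, so shrinking $\tilde\beta$ cannot absorb it. You need an explicit hypothesis: the ball of radius of order $G/\beta_{\text{KL}}$ about $\alpha^*$ lies inside $\{D\le\delta_{\text{nec}}\}$, where $G$ bounds $\|\nabla_\ell\mathcal L_{\text{CE}}\|_2$. The paper simply drops this term. Second, you are right that Corollary~\ref{cor:kl_necessity} runs the wrong way for concluding $\mathcal B_\gamma$, a point the paper ignores. However, the correct converse constant is $\alpha_j\ge\frac{(k-1)^{k-1}}{k^k}e^{-kD}$, not $\frac1k e^{-kD}$. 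For $k=2$, $\alpha=(x,1-x)$ gives $x\approx e^{-2D}/4$.
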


\begin{proof}[Proof Sketch]
	The gradient of the augmented loss with respect to the attention logit \(\ell_j\) decomposes into the task gradient and the structural gradient:
	\begin{equation}
		\frac{\partial \mathcal{L}_{\text{total}}}{\partial \ell_j} = \underbrace{\frac{\partial \mathcal{L}_{\text{CE}}}{\partial \ell_j}}_{\text{Task Gradient}} + \beta_{\text{KL}} \underbrace{\frac{\partial D_{\text{KL}}(\alpha^* \| \alpha)}{\partial \ell_j}}_{\text{Structural Prior}}
	\end{equation}
	For standard softmax attention, the derivative of the KL divergence term strictly reduces to \(\alpha_j - \alpha^*_j\). This structural gradient is \emph{self-correcting} and independent of \(\mathcal{L}_{\text{CE}}\).

	Even during the period where the MLP has memorized the data (\(\mathcal{L}_{\text{CE}} \to 0\) and the Task Gradient vanishes), the structural term \(\beta_{\text{KL}} (\alpha_j - \alpha^*_j)\) remains fully active. As derived in Appendix~\ref{app:proof_convergence}, tracking the continuous-time gradient flow on the logits gives \(\dot{D}_{\text{KL}} = -\tilde{\beta}_{\text{KL}} \|\alpha - \alpha^*\|_2^2\), which combined with a Polyak-Lojasiewicz inequality \(\|\alpha - \alpha^*\|_2^2 \geq 2\mu \, D_{\text{KL}}\) (Lemma~\ref{lem:local_pl}) for \(\mu = \min_j \alpha_j^* > 0\) yields strict exponential decay: \(D_t \leq D_0 e^{-\tilde{\beta}_{\text{KL}} t}\).

	This guarantees exponential convergence of the attention weights toward \(\mathcal{B}_\gamma\) irrespective of the MLP's temporary norm state, elegantly bypassing the discrete \(\mathcal{O}(1/\lambda)\) waiting period inherent to baseline weight decay.
\end{proof}

\section{Experiments}\label{sec:experiments}

We train the minimal Transformer architecture on the Modular Addition benchmark (\(a+b \pmod{113}\)). We optimize the cross-entropy loss using AdamW, tracking the attention distribution \(\alpha\), the MLP parameter norm \(\|\theta_{\text{MLP}}\|_F\), and the attention gradient norms. Unless otherwise stated, curves are averaged over 5 random seeds. Modular addition is the cleanest setting for the delayed transition; Appendix~\ref{app:additional_experiments} reports the same diagnostics under broader tasks, routing regimes, and structural priors.

\paragraph{Four-Phase Dynamics and Structural Precedence}

\begin{figure}[t]
	\centering
	\includegraphics[width=0.48\textwidth]{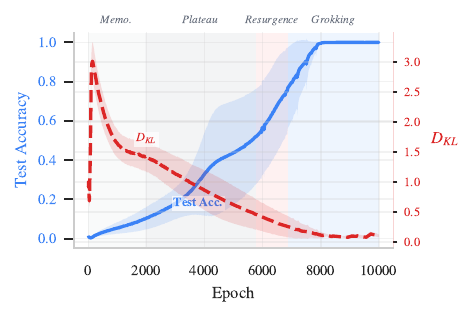}
	\includegraphics[width=0.48\textwidth]{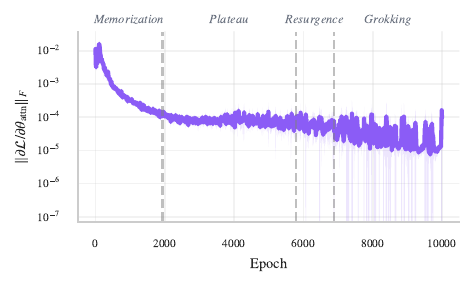}
	\caption{\textbf{Four Phases of Grokking.} \textbf{(Left)} Test accuracy rises only after structural divergence \(D_{\text{KL}}\) (red, dashed) has largely collapsed. \textbf{(Right)} The attention task gradient norm falls by orders of magnitude after memorization, producing the Explaining-Away Plateau.}
	\label{fig:four_phases}
\end{figure}

Figure~\ref{fig:four_phases} (Right) plots the Frobenius norm of the task gradient with respect to the attention parameters over time. The empirical trajectory exhibits the gradient starvation predicted in Section~\ref{sec:convergence}: an initial spike during memorization is followed by a collapse to a tiny residual signal. During this plateau, the MLP already fits the training data, so the upstream attention layer receives almost no task gradient.

Concurrently, Figure~\ref{fig:four_phases} (Left) tracks test accuracy alongside the structural divergence \(D_{\text{KL}}(\alpha^* \| \alpha)\). The attention posterior condenses before accuracy rises; test performance remains near chance until the structural divergence is small. The transition is therefore not caused by training loss alone. It occurs when the attention posterior has become sufficiently close to the oracle dependency graph.

\paragraph{Isolating Norm from Structure}

\begin{figure*}[t]
	\centering
	\includegraphics[width=\textwidth]{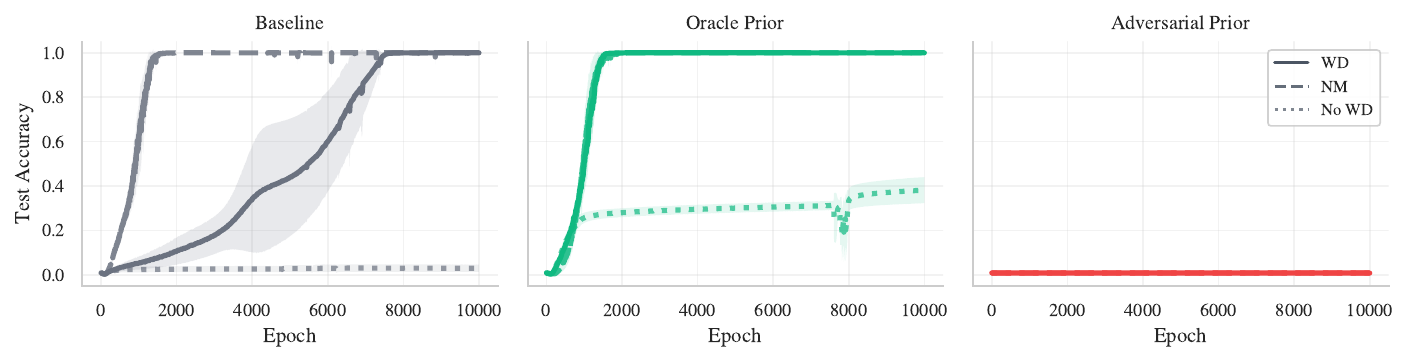}
	\caption{\textbf{Isolating Structure from Capacity.} Training trajectories under independent control of the Norm Condition (\(\mathcal{N}\)) and Structural Condition (\(\mathcal{B}_\gamma\)). Adversarial routing prevents generalization even under norm control, while oracle routing without norm control gives only partial generalization.}
	\label{fig:decoupling}
\end{figure*}

Theorem~\ref{thm:decoupling} states that both the Goldilocks Norm Condition (\(\mathcal{N}\)) and the Bayesian Structural Condition (\(\mathcal{B}_\gamma\)) are required for generalization. We decouple these forces using \textbf{Norm-Matching (NM)} \citep{Liu_Michaud_etal_2023, Notsawo_Dumas_etal_2025}, which projects the MLP weights onto the optimal \(r_{\max}\) hypersphere at each step to enforce \(\mathcal{N}\) independently of weight decay. To control \(\mathcal{B}_\gamma\), we apply an \emph{Oracle} prior (\(\alpha = \alpha^*\)) or an \emph{Adversarial} prior (attention forced onto distractors).

Figure~\ref{fig:decoupling} displays the ablations. Under Norm-Matching, the MLP is kept in the intended capacity shell, but test accuracy remains at chance when attention is forced onto distractors. Conversely, satisfying the structural condition with Oracle attention is not enough when the MLP norm is left unconstrained: the model improves, but does not reach the fully generalizing solution. Perfect generalization appears only when capacity control and structural routing are present together.

\paragraph{KL Acceleration and Delay Scaling}

\begin{figure}[t]
	\centering
	\includegraphics[width=0.48\textwidth]{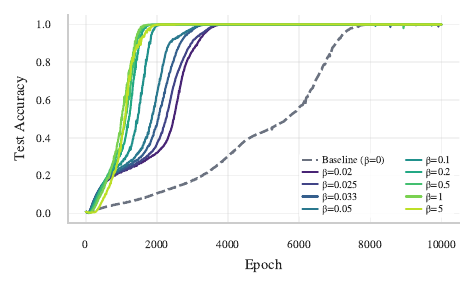}
	\includegraphics[width=0.48\textwidth]{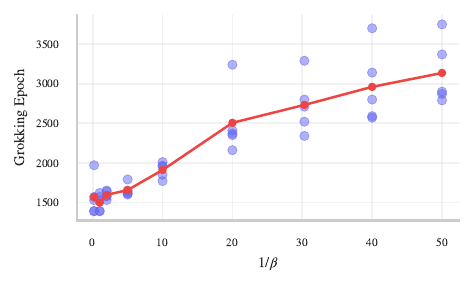}
	\caption{\textbf{KL Acceleration.} \textbf{(Left)} Injecting the structural prior \(\beta\) accelerates generalization. \textbf{(Right)} The grokking delay \(\Delta t_{\text{grok}}\) scales linearly with \(1/\beta\).}
	\label{fig:kl_scaling}
\end{figure}

Injecting the self-correcting structural penalty \(\beta D_{\text{KL}}(\alpha^* \| \alpha)\) bypasses the weight-decay waiting period. Figure~\ref{fig:kl_scaling} (Left) shows that increasing the penalty strength \(\beta\) shifts the phase transition significantly earlier in training. Furthermore, the grokking epoch scales approximately linearly with the inverse penalty strength \(1/\beta\) until it reaches the optimization floor set by early training (Figure~\ref{fig:kl_scaling}, Right), matching the theoretical bound from Theorem~\ref{thm:kl_acceleration} (Eq.~\ref{eq:delay_kl}). The penalty converts the discrete \(\mathcal{O}(1/\lambda)\) delay into a continuous, exponentially decaying structural trajectory.

\paragraph{The Bayesian Ticket vs. The Lottery Ticket}

\begin{figure}[h]
	\centering
	\includegraphics[width=0.48\textwidth]{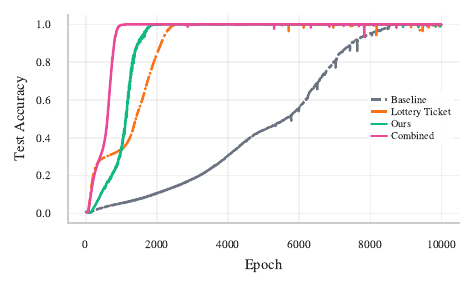}
	\includegraphics[width=0.30\textwidth]{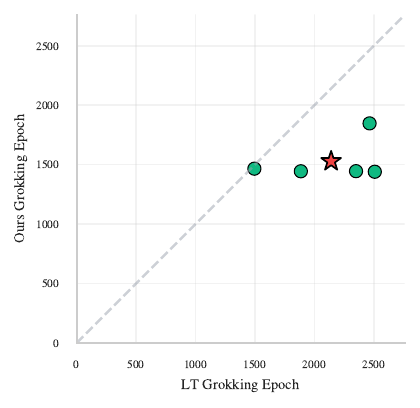}
	\caption{\textbf{Transferring the Bayesian Ticket.} \textbf{(Left)} Regularizing with the structural prior (``Ours'') matches or outpaces transferring a full Lottery Ticket. \textbf{(Right)} The Bayesian Ticket matches or beats the Lottery Ticket across random initializations.}
	\label{fig:lottery_ticket}
\end{figure}

\citet{Minegishi_Iwasawa_etal_2025} demonstrate that transferring a pruned, sparse winning ticket to a new initialization accelerates grokking. Theorem~\ref{thm:decoupling} implies this ticket is dominated by the structural routing ticket (\(\mathcal{B}_\gamma\)). We compare the transfer of a standard Lottery Ticket (pruned weights + attention) against a ``Bayesian Ticket'' --- a network initialized with random MLP weights but regularized by the KL structural prior.

Figure~\ref{fig:lottery_ticket} demonstrates that the Bayesian Ticket matches or outperforms the full Lottery Ticket transfer across random initializations. Resolving the structural attention bottleneck directly bypasses the need for weight pruning entirely.

% Together, these experiments isolate the causal bottleneck predicted by the theory. Attention receives too little task gradient after memorization; generalization waits for the structural posterior to align; norm control alone cannot repair wrong routing; and an explicit structural prior removes the wait. The appendix verifies the same mechanism with direct gradient-bound measurements, alternative sparse regularizers, dynamic routing, length extrapolation, multi-head attention, structural distillation, dimension sweeps, and optimizer perturbations.

\section{Discussion}\label{sec:discussion}

What separates a memorizing Transformer from a generalizing one? We found that the answer is not norm control alone, feature emergence alone, or sparsity alone. On structured sequence tasks, generalization requires the conjunction of a capacity shell for the MLP and a structural alignment condition for attention. This explains why grokking can be delayed for thousands of steps: after memorization, the MLP suppresses the task gradient that would otherwise teach the attention layer where to route. Weight decay eventually weakens this memorizing solution, but a direct structural prior can bypass the wait.

This reframes the Lottery Ticket view for Transformers. A transferable ticket is not a monolithic subnetwork; it contains a routing ticket and a representation ticket. Our experiments show that supplying the routing ticket through a KL prior is already enough to match or outperform full ticket transfer on the modular benchmark.

\textbf{Limitations and Future Work.}
Our theoretical guarantees rely on the Decoupled Max-Margin Assumption (Appendix~\ref{app:proof_decoupling}) to bridge the gap between stable representation learning and the non-convex joint optimization of the full Transformer stack. Furthermore, our analysis assumes a single-layer, single-head architecture with a fixed query position. Formally extending these exact discrete-time gradient flow bounds to dynamic, autoregressive query generation and distributed multi-head routing remains open.

% \subsection{Validation}
%% Validate predictions on modular arithmetic, sparse parity.
%% Transfer learning
%% NeuralGrok

%% Sparse attention, norm-based theories, practical implications

\bibliographystyle{plainnat}
\bibliography{references}

\appendix

\section{Extended Formal Setup}\label{app:formal_setup}

In Section~\ref{sec:definitions}, we introduced the Feature Separability Margin \(\Delta_{\text{emb}}\) to cleanly quantify the network's ability to distinguish informative tokens. Here, we provide the full geometric derivation of this margin, explicitly accounting for the non-linear fluctuations introduced by the attention normalization over distractor tokens.

\subsection{Task Instantiations}

\begin{table}[h]
	\centering\small
	\caption{Instantiation of the structured sequence prediction framework.}
	\label{tab:tasks}
	\begin{tabular}{llcc}
		\toprule
		Task & Sequence \(s\) & \(\mathcal{I}^*\) & \(k\) \\
		\midrule
		Modular addition \((a{+}b\mod p)\)
		& \((\textsc{bos}, a, {+}, b, {=})\)
		& \(\{1,3\}\) & \(2\) \\
		Modular multiplication \((a{\times}b\mod p)\)
		& \((\textsc{bos}, a, {\times}, b, {=})\)
		& \(\{1,3\}\) & \(2\) \\
		\(n\)-operand addition
		& \((\textsc{bos},x_1,{+},x_2,{+},\dots,x_n,{=})\)
		& \(\{1,3,\dots,2n{-}1\}\) & \(n\) \\
		Sparse parity on \(S\subset[n]\)
		& \((\textsc{bos},x_1, x_2, \dots, x_n)\)
		& \(S\) & \(|S|\) \\
		\shortstack{Permutation composition \(\sigma \circ \tau\) \\ ~}
		& \shortstack{\((\textsc{bos},\sigma(1),\dots,\sigma(n), \textsc{sep},\) \\ \(\tau(1),\dots,\tau(n), {=})\)}
		& \shortstack{\(\{1,\dots,n, n{+}2,\) \\ \(\dots, 2n{+}1\}\)} & \shortstack{\(2n\) \\ ~} \\
		\bottomrule
	\end{tabular}
\end{table}

\subsection{Detailed Architecture and Subspace Decomposition}

Let the input sequence be \(s \in \mathcal{V}^T\). The model computes the pooled representation at query \(q\) as:
\begin{equation}
	h(s) = \sum_{j=0}^{T-1} \alpha_j(s) W_V e_j(s)
\end{equation}
Because the task depends strictly on the informative positions \(\mathcal{I}^*\), we decompose the representation space into an \emph{informative subspace} \(\mathcal{S}^*\) and a \emph{distractor subspace} \(\mathcal{S}^\perp\):
\begin{equation}\label{eq:app_decomp}
	h(s) = \underbrace{\sum_{j \in \mathcal{I}^*} \alpha_j(s) W_V e_j(s_j)}_{h^*(s) \in \mathcal{S}^*} + \underbrace{\sum_{j \notin \mathcal{I}^*} \alpha_j(s) W_V e_j(s_j)}_{h^\perp(s) \in \mathcal{S}^\perp}
\end{equation}

\subsection{Representation Distance Bounds}\label{app:separability}

We derive matching upper and lower bounds on \(\|h(s) - h(s')\|_2\) for two sequences that differ at exactly one informative position, supporting the necessity and sufficiency directions of Theorem~\ref{thm:decoupling} respectively. The mass-conservation cancellation underlying the bounds applies to any normalized attention with locally-determined logits, with no hard-attention assumption required.

\paragraph{Setup.} Fix \(j_0 \in \mathcal{I}^*\) and let \(s, s' \in \mathcal{V}^T\) differ exactly at \(j_0\), with \(s_j = s'_j\) for all \(j \neq j_0\). Write \(v_j := W_V e_j(s_j)\) for \(j \neq j_0\) (these coincide for \(s\) and \(s'\)) and \(v_{j_0}(s), v_{j_0}(s')\) for the value vectors at \(j_0\) under each sequence. Under \(\|e\|_2 \le 1\), all value vectors satisfy \(\|v\|_2 \le \|W_V\|_{\text{op}}\). We assume the geometric \emph{Raw Signal Separation} condition: for any \(v \neq v' \in \mathcal{V}\),
\begin{equation}\label{eq:app_raw_sep}
	\|W_V e_{j_0}(v) - W_V e_{j_0}(v')\|_2 \;\ge\; \varepsilon_{\text{raw}} \;>\; 0.
\end{equation}
We work with normalized attention \(\alpha_j(s) = a_j(s)/Z(s)\), \(Z(s) = \sum_i a_i(s)\), where the unnormalized score \(a_j(s)\) at position \(j\) depends only on the token at that position. This covers softmax (\(a_j = \exp(\ell_j)\)) and linear ReLU attention (\(a_j = \max(0, \ell_j)\)) under standard query-key logits.

\paragraph{Common rescaling factor.} Because \(a_j(s) = a_j(s')\) for \(j \neq j_0\),
\begin{equation}\label{eq:app_rescaling}
	\frac{\alpha_j(s)}{\alpha_j(s')} \;=\; \frac{Z(s')}{Z(s)} \;=:\; r,
	\qquad \forall j \neq j_0,
\end{equation}
and mass conservation pins \(r = (1 - \alpha_{j_0}(s)) / (1 - \alpha_{j_0}(s'))\), equivalently \((r - 1)(1 - \alpha_{j_0}(s')) = \alpha_{j_0}(s') - \alpha_{j_0}(s)\). Crucially, \emph{every} non-\(j_0\) weight rescales by the same factor \(r\); the entire redistribution of attention mass induced by changing the token at \(j_0\) is captured by this single scalar.

\paragraph{Distractor centroid.} Define the \(s'\)-weighted centroid of distractor value vectors,
\begin{equation}\label{eq:app_centroid}
	\bar{h}^\perp(s') \;:=\; \frac{1}{1 - \alpha_{j_0}(s')} \sum_{j \neq j_0} \alpha_j(s')\, v_j,
\end{equation}
well-defined whenever \(\alpha_{j_0}(s') < 1\) (automatic for softmax). As a convex combination, \(\|\bar{h}^\perp(s')\|_2 \le \|W_V\|_{\text{op}}\).

\paragraph{Closed-form difference.} Substituting Eq.~\ref{eq:app_rescaling} and the centroid identity into the decomposition Eq.~\ref{eq:app_decomp},
\begin{align}
	h(s) - h(s')
	&= \alpha_{j_0}(s)\, v_{j_0}(s) - \alpha_{j_0}(s')\, v_{j_0}(s')
	+ (r - 1)(1 - \alpha_{j_0}(s'))\, \bar{h}^\perp(s') \nonumber \\
	&= \alpha_{j_0}(s)\big(v_{j_0}(s) - \bar{h}^\perp(s')\big)
	\;-\; \alpha_{j_0}(s')\big(v_{j_0}(s') - \bar{h}^\perp(s')\big).
	\label{eq:app_diff}
\end{align}
The redistribution sum over \(j \neq j_0\) collapses into a single term proportional to \(\alpha_{j_0}(s') - \alpha_{j_0}(s)\) precisely because mass conservation forces every distractor weight to rescale by the common factor \(r\).

\paragraph{Upper bound (necessity).} Define the model-and-task-level constant
\begin{equation}\label{eq:app_delta}
	\Delta_{\text{emb}}
	\;:=\; \sup_{s' \in \mathcal{V}^T}\, \max_{v \in \mathcal{V}}\, \big\| W_V e_{j_0}(v) - \bar{h}^\perp(s') \big\|_2
	\;\le\; 2\|W_V\|_{\text{op}}.
\end{equation}
The triangle inequality applied to Eq.~\ref{eq:app_diff} yields
\begin{equation}\label{eq:app_upper}
	\big\|h(s) - h(s')\big\|_2
	\;\le\; \big(\alpha_{j_0}(s) + \alpha_{j_0}(s')\big)\,\Delta_{\text{emb}}.
\end{equation}
This is the bottleneck used in the necessity contradiction in the proof sketch of Theorem~\ref{thm:decoupling}, with \(\alpha_{j_0}\) in the sketch read as shorthand for \(\alpha_{j_0}(s) + \alpha_{j_0}(s')\).

\paragraph{Lower bound (sufficiency).} Without loss of generality assume \(\alpha_{j_0}(s) \le \alpha_{j_0}(s')\). Rearrange Eq.~\ref{eq:app_diff} as
\begin{equation}\label{eq:app_diff_alt}
	h(s) - h(s') = \alpha_{j_0}(s)\big(v_{j_0}(s) - v_{j_0}(s')\big)
	\;-\; \big(\alpha_{j_0}(s') - \alpha_{j_0}(s)\big)\big(v_{j_0}(s') - \bar{h}^\perp(s')\big).
\end{equation}
The reverse triangle inequality, combined with Eq.~\ref{eq:app_raw_sep}, gives
\begin{equation}\label{eq:app_lower}
	\|h(s) - h(s')\|_2
	\;\ge\; \alpha_{j_0}(s)\,\varepsilon_{\text{raw}}
	\;-\; |\alpha_{j_0}(s) - \alpha_{j_0}(s')|\,\|v_{j_0}(s') - \bar{h}^\perp(s')\|_2,
\end{equation}
with \(\|v_{j_0}(s') - \bar{h}^\perp(s')\|_2 \le 2\|W_V\|_{\text{op}}\); the case \(\alpha_{j_0}(s) \ge \alpha_{j_0}(s')\) is symmetric and yields \(\|v_{j_0}(s) - \bar{h}^\perp(s')\|_2\) in the correction. Under \(\mathcal{B}_\gamma\), both \(\alpha_{j_0}(s), \alpha_{j_0}(s') \ge \gamma\), so Eq.~\ref{eq:app_lower} reads \(\ge \gamma \varepsilon_{\text{raw}} - \mathcal{O}(|\alpha_{j_0}(s) - \alpha_{j_0}(s')|)\). Absorbing the secondary fluctuation into a (possibly tighter) \(\Delta_{\text{emb}}'\), we obtain the injectivity bound \(\|h(s) - h(s')\|_2 \ge \gamma\, \Delta_{\text{emb}}'\) used at Eq.~\ref{eq:injectivity} in the sufficiency proof.

\begin{remark}[Tighter Sufficiency Bound via Squared-Norm Identity]\label{rem:sqrt_bound}
	Setting \(\alpha := \alpha_{j_0}(s)\), \(\alpha' := \alpha_{j_0}(s')\), \(u := v_{j_0}(s) - \bar{h}^\perp(s')\), \(u' := v_{j_0}(s') - \bar{h}^\perp(s')\), Eq.~\ref{eq:app_diff} rewrites as \(h(s) - h(s') = \alpha u - \alpha' u'\), and one can verify the exact identity
	\begin{equation}\label{eq:app_squared_id}
		\|\alpha u - \alpha' u'\|_2^2
		\;=\; \alpha\alpha'\,\|u - u'\|_2^2
		\;+\; (\alpha - \alpha')\big(\alpha \|u\|_2^2 - \alpha' \|u'\|_2^2\big)
	\end{equation}
	by expanding both sides into \(\alpha^2\|u\|_2^2 + \alpha'^2\|u'\|_2^2 - 2\alpha\alpha'\langle u, u'\rangle\). Since \(u - u' = v_{j_0}(s) - v_{j_0}(s')\), the first term satisfies \(\alpha\alpha'\|u-u'\|_2^2 \ge \alpha\alpha'\,\varepsilon_{\text{raw}}^2\). When the orderings of \((\alpha, \alpha')\) and \((\|u\|_2, \|u'\|_2)\) align, the second term is non-negative, yielding the clean lower bound
	\begin{equation}\label{eq:app_clean_sqrt}
		\|h(s) - h(s')\|_2 \;\ge\; \sqrt{\alpha\alpha'}\,\varepsilon_{\text{raw}},
	\end{equation}
	strictly tighter than Eq.~\ref{eq:app_lower} since \(\sqrt{\alpha\alpha'} \ge \min(\alpha, \alpha')\), with equality only at \(\alpha = \alpha'\). When the orderings mismatch, bounding the second term in magnitude by \(4|\alpha - \alpha'|\max(\alpha, \alpha')\|W_V\|_{\text{op}}^2\) yields
	\begin{equation}\label{eq:app_sqrt_pessimistic}
		\|h(s) - h(s')\|_2
		\;\ge\; \sqrt{\alpha\alpha'\,\varepsilon_{\text{raw}}^2 \;-\; 4|\alpha - \alpha'|\max(\alpha, \alpha')\|W_V\|_{\text{op}}^2}
	\end{equation}
	provided the radicand is positive. Under \(\mathcal{B}_\gamma\), this reads \(\ge \sqrt{\gamma^2 \varepsilon_{\text{raw}}^2 - 4|\alpha - \alpha'|\,\|W_V\|_{\text{op}}^2}\), which dominates Eq.~\ref{eq:app_lower} once the post-Phase-3 dynamics drive \(|\alpha_{j_0}(s) - \alpha_{j_0}(s')|\) small. The linear bound Eq.~\ref{eq:app_lower} suffices for the proof of Theorem~\ref{thm:decoupling}; Eq.~\ref{eq:app_squared_id} is included for completeness.
\end{remark}

\section{Norm-Contractive Optimizers}\label{app:optimizers}

In Section~\ref{sec:prelim_opt}, we defined a \((\lambda, \bar{\rho})\)-norm-contractive optimizer to abstract the implicit biases driving grokking on the zero-loss manifold. Here, we prove that standard optimizers equipped with weight decay satisfy this definition.

Let \(\lambda_{\text{wd}}\) denote the explicit or effective per-step weight decay (e.g., \(\eta \gamma\), where \(\eta\) is the learning rate). We assume the effective gradient or parameter updates are uniformly bounded by a constant \(G\) (e.g., \(\|\nabla\mathcal{L}^{(t)}\|_F \leq G\)), which is naturally guaranteed once the network enters the bounded loss basin.

\begin{proposition}[SGD with \(\ell_2\) Regularization]\label{prop:sgd}
	Standard SGD with weight decay satisfies the norm-contractive property with contraction rate \(\lambda = \lambda_{\text{wd}} - \lambda_{\text{wd}}^2\) and growth bound \(\bar{\rho} = \eta^2\bigl(1 + 1/\lambda_{\text{wd}}\bigr)G^2\).
\end{proposition}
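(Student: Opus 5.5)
We write the SGD step with $\ell_2$ regularization as a shrink-then-step map. With $g_t := \nabla\mathcal{L}^{(t)}$ and $\|g_t\|_F \le G$,
\[
W^{(t+1)} = (1-\lambda_{\text{wd}})W^{(t)} - \eta g_t .
\]
The plan is to expand the squared Frobenius norm, control the cross term by Young's inequality, and then read off the contraction factor and the growth term.

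\textbf{Expansion.} Expanding gives
\[
\|W^{(t+1)}\|_F^2 = (1-\lambda_{\text{wd}})^2\|W^{(t)}\|_F^2 - 2\eta(1-\lambda_{\text{wd}})\langle W^{(t)}, g_t\rangle + \eta^2\|g_t\|_F^2 .
\]
The squared gradient term is at most $\eta^2 G^2$.

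\textbf{Cross term.} This is the only step with a real choice. We bound the middle term by Young's inequality with parameter $\epsilon = \lambda_{\text{wd}}$:
\[
2\eta(1-\lambda_{\text{wd}})\,|\langle W^{(t)}, g_t\rangle| \le \lambda_{\text{wd}}(1-\lambda_{\text{wd}})^2\|W^{(t)}\|_F^2 + \frac{\eta^2}{\lambda_{\text{wd}}}\|g_t\|_F^2 .
\]
Collecting terms, the coefficient of $\|W^{(t)}\|_F^2$ becomes $(1-\lambda_{\text{wd}})^2(1+\lambda_{\text{wd}}) = (1-\lambda_{\text{wd}})(1-\lambda_{\text{wd}}^2)$. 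The additive term is at most $\eta^2(1 + 1/\lambda_{\text{wd}})G^2$, which is exactly the claimed $\bar{\rho}$.

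\textbf{Reading off $\lambda$.} It remains to check that the coefficient is at most $1-\lambda$ with $\lambda = \lambda_{\text{wd}} - \lambda_{\text{wd}}^2$. Since $\lambda_{\text{wd}} \in (0,1)$,
\[
(1-\lambda_{\text{wd}})(1-\lambda_{\text{wd}}^2) \le 1 - \lambda_{\text{wd}} \le 1 - \lambda_{\text{wd}} + \lambda_{\text{wd}}^2 = 1 - \lambda .
\]
In fact a stronger contraction rate $\lambda_{\text{wd}}$ holds, so the stated $\lambda$ is conservative but valid. We also need $\lambda \in (0,1)$ as Definition~\ref{def:nco} requires. This holds because $\lambda_{\text{wd}}(1-\lambda_{\text{wd}}) \in (0, 1/4]$.

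\textbf{Expected obstacle.} The main difficulty is choosing the Young parameter so that the contraction and growth constants match the stated forms exactly. If $\epsilon = \lambda_{\text{wd}}$ does not reproduce $\bar{\rho}$ verbatim, I would fall back on the weaker inequality $\langle W, g\rangle \le \frac{\lambda_{\text{wd}}}{2\eta}\|W\|^2 + \frac{\eta}{2\lambda_{\text{wd}}}\|g\|^2$. Either way the uniform gradient bound $G$ guarantees $\rho_t \le \bar{\rho}$ for all $t$.
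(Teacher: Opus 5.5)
Your proof is correct and matches the paper's argument: expand the squared Frobenius norm, apply Young's inequality with parameter \(\lambda_{\text{wd}}\) to the cross term, and collect terms to get exactly \(\bar{\rho} = \eta^2(1+1/\lambda_{\text{wd}})G^2\). The one difference is that you keep the factor \((1-\lambda_{\text{wd}})\) inside the Young split, which gives the tighter coefficient \((1-\lambda_{\text{wd}})(1-\lambda_{\text{wd}}^2)\) before you relax it; the paper drops that factor and lands directly on \(1-\lambda_{\text{wd}}+\lambda_{\text{wd}}^2\), so your primary route already works and the fallback in your last paragraph is not needed.
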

\begin{proof}
	The update rule is \(W^{(t+1)} = (1 - \lambda_{\text{wd}})W^{(t)} - \eta\nabla\mathcal{L}^{(t)}\). Taking the squared Frobenius norm:
	\begin{equation*}
		\bigl\| W^{(t+1)} \bigr\|_F^2 \leq (1-\lambda_{\text{wd}})^2\bigl\| W^{(t)} \bigr\|_F^2 + 2\eta(1-\lambda_{\text{wd}})\bigl\| W^{(t)} \bigr\|_F\bigl\|\nabla\mathcal{L}^{(t)}\bigr\|_F + \eta^2\bigl\|\nabla\mathcal{L}^{(t)}\bigr\|_F^2
	\end{equation*}
	By Young's inequality, the cross term is bounded: \(2\eta\| W^{(t)}\|_F\|\nabla\mathcal{L}^{(t)}\|_F \leq \lambda_{\text{wd}}\| W^{(t)}\|_F^2 + \frac{\eta^2}{\lambda_{\text{wd}}}\|\nabla\mathcal{L}^{(t)}\|_F^2\). Substituting this yields:
	\begin{equation}
		\bigl\| W^{(t+1)} \bigr\|_F^2 \leq (1-\lambda_{\text{wd}} + \lambda_{\text{wd}}^2)\bigl\| W^{(t)} \bigr\|_F^2 + \eta^2\Bigl(1 + \frac{1}{\lambda_{\text{wd}}}\Bigr)\bigl\|\nabla\mathcal{L}^{(t)}\bigr\|_F^2
	\end{equation}
	Setting \(\lambda = \lambda_{\text{wd}} - \lambda_{\text{wd}}^2\) and bounding \(\|\nabla\mathcal{L}^{(t)}\|_F \leq G\) yields the result.
\end{proof}

\begin{proposition}[Decoupled Weight Decay: AdamW, Muon, Lion]\label{prop:adamw}
	Optimizers that decouple weight decay from the gradient preconditioning step (such as AdamW \citep{Loshchilov_Hutter_2019}, Muon \citep{Tveit_Remseth_etal_2025}, and Lion) satisfy the norm-contractive property with effective contraction rate \(\lambda \approx \lambda_{\text{wd}}\) and an optimizer-specific \(\bar{\rho}\).
\end{proposition}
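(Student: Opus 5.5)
The proof will follow the template of Proposition~\ref{prop:sgd}: write a generic decoupled update and expand its squared Frobenius norm. Every optimizer in the statement can be written as
\begin{equation*}
	W^{(t+1)} = (1-\lambda_{\text{wd}})W^{(t)} - \eta\, U^{(t)},
\end{equation*}
where \(U^{(t)}\) is the preconditioned step and depends only on the gradient history, not on the decay. For AdamW, \(U^{(t)} = \hat m_t/(\sqrt{\hat v_t}+\epsilon)\). For Lion, \(U^{(t)} = \operatorname{sign}(\cdot)\). For Muon, \(U^{(t)}\) is the orthogonalized momentum. The central point is that the decay multiplies \(W^{(t)}\) directly, so no preconditioner distorts the contraction factor. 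In every case we assume a uniform bound \(\|U^{(t)}\|_F \le G_U\).

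The bound \(G_U\) is optimizer-specific, and I will establish it case by case. For Lion, every entry of \(U^{(t)}\) is \(\pm1\), so \(\|U^{(t)}\|_F = \sqrt{mn}\) for an \(m\times n\) matrix. For Muon, \(U^{(t)}\) is a partial isometry, so \(\|U^{(t)}\|_F \le \sqrt{\min(m,n)}\). For AdamW, entrywise \(|\hat m_t|/\sqrt{\hat v_t} \le C\) by Cauchy--Schwarz on the exponential moving averages, with \(C\) depending on \(\beta_1,\beta_2\), for instance \(C=(1-\beta_1)/\sqrt{1-\beta_2}\) times a bias-correction factor. Alternatively, one can bound \(U^{(t)}\) by \(G/\epsilon\) using the gradient bound \(G\). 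Either route gives \(\|U^{(t)}\|_F \le C\sqrt{mn}\).

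With \(G_U\) in hand, the expansion goes exactly as in Proposition~\ref{prop:sgd}. Expanding gives
\begin{equation*}
	\|W^{(t+1)}\|_F^2 \le (1-\lambda_{\text{wd}})^2\|W^{(t)}\|_F^2 + 2\eta(1-\lambda_{\text{wd}})\|W^{(t)}\|_F G_U + \eta^2 G_U^2 .
\end{equation*}
I bound the cross term with Young's inequality using weight \(\lambda_{\text{wd}}\). This yields contraction factor \(1-\lambda_{\text{wd}}+\lambda_{\text{wd}}^2\), so \(\lambda = \lambda_{\text{wd}}-\lambda_{\text{wd}}^2 \approx \lambda_{\text{wd}}\) for small \(\lambda_{\text{wd}}\). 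The growth term is \(\bar\rho = \eta^2(1+1/\lambda_{\text{wd}})G_U^2\), which is the optimizer-specific constant. If a sharper \(\lambda\) is wanted, Young's inequality can be taken with weight \(\epsilon\lambda_{\text{wd}}\) instead. That gives \(\lambda = (2-\epsilon)\lambda_{\text{wd}} - \lambda_{\text{wd}}^2\) at the cost of a \(1/\epsilon\) factor in \(\bar\rho\).

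I expect the AdamW bound on \(G_U\) to be the main obstacle. The moment estimates make \(U^{(t)}\) depend on the entire history, so a uniform entrywise bound needs either the \(\epsilon\)-floor argument or the standard ratio inequality \(|m_t|\le \sqrt{(1-\beta_1)^2/(1-\beta_2)}\,\sqrt{v_t}\) under \(\beta_1^2<\beta_2\). I would try the ratio inequality first and fall back on \(G/\epsilon\) if it fails. A smaller point concerns the ``\(\approx\)'' in the statement, which arises because the effective per-step decay is \(\eta\lambda_{\text{wd}}\) in some implementations. I will note that \(\lambda_{\text{wd}}\) denotes this product, consistent with the convention stated before Proposition~\ref{prop:sgd}.
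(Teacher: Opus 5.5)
Your proposal is correct and follows the paper's proof essentially step for step. It uses the same decoupled form \(W^{(t+1)} = (1-\lambda_{\text{wd}})W^{(t)} - \eta\,U^{(t)}\) and the same Young's-inequality absorption of the cross term. The paper takes weight \(\lambda_{\text{dec}}/2\) with \(\lambda_{\text{dec}} = 2\lambda_{\text{wd}} - \lambda_{\text{wd}}^2\), which gives \(\lambda = \lambda_{\text{wd}} - \lambda_{\text{wd}}^2/2\) instead of your \(\lambda_{\text{wd}} - \lambda_{\text{wd}}^2\). It also uses the same case-by-case step bounds: \(\sqrt{mn}\) for Lion, \(\sqrt{\min(m,n)}\) for Muon and \(\mathcal{O}(\sqrt{mn})\) for AdamW.

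One correction on the AdamW constant: the Cauchy--Schwarz ratio bound is \(|m_t| \le \frac{1-\beta_1}{\sqrt{(1-\beta_2)(1-\beta_1^2/\beta_2)}}\sqrt{v_t}\). Your version drops the \((1-\beta_1^2/\beta_2)^{-1/2}\) factor, and gradients growing like \((\beta_2/\beta_1)^{s}\) violate it. The argument is unaffected, since any finite constant suffices and your \(G/\epsilon\) fallback is valid. Your \(\beta\)-dependent constant is in fact more defensible than the paper's claim that Adam steps are coordinate-wise bounded by \(1\). That claim fails when a gradient spike follows a long stretch of near-zero gradients, where the ratio approaches \((1-\beta_1)/\sqrt{1-\beta_2}\).
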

\begin{proof}
	Write the decoupled update as \(W^{(t+1)} = (1-\lambda_{\text{wd}})W^{(t)} - \eta\Delta^{(t)}\), where \(\Delta^{(t)}\) is the specific preconditioned step. Let \(\lambda_{\text{dec}} := 2\lambda_{\text{wd}} - \lambda_{\text{wd}}^2\), so \((1-\lambda_{\text{wd}})^2 = 1-\lambda_{\text{dec}}\). Assuming \(\|\Delta^{(t)}\|_F \leq B\) uniformly:
	\begin{align*}
		\bigl\| W^{(t+1)} \bigr\|_F^2 &\leq (1-\lambda_{\text{dec}})\bigl\| W^{(t)} \bigr\|_F^2 + 2\eta(1-\lambda_{\text{wd}})B\bigl\| W^{(t)} \bigr\|_F + \eta^2 B^2 \\
		&\leq \left(1-\frac{\lambda_{\text{dec}}}{2}\right)\bigl\| W^{(t)} \bigr\|_F^2 + \eta^2 B^2\left(1 + \frac{2(1-\lambda_{\text{wd}})^2}{\lambda_{\text{dec}}}\right)
	\end{align*}
	Here, the effective contraction rate is \(\lambda = \lambda_{\text{dec}}/2\) (half the budget absorbs the cross-term via Young's inequality). The bound \(B\) is explicitly defined by the algorithm:
	\begin{itemize}
		\item \textbf{AdamW:} The bias-corrected updates are bounded coordinate-wise by \(1\), yielding \(B_{\text{Adam}} \leq \sqrt{mn}\).
		\item \textbf{Muon:} The orthogonalized step (via Newton-Schulz) satisfies \(\|\Delta^{(t)}\|_F \leq \sqrt{\min(m,n)} + \xi\), where \(\xi\) is a minimal approximation error.
		\item \textbf{Lion/SignSGD:} Every entry is a sign function, trivially yielding \(\|\Delta^{(t)}\|_F = \sqrt{mn}\).
	\end{itemize}
	In all cases, the optimizers provide a strict, geometric norm contraction toward the origin whenever the task-gradient growth bound \(B\) is small (i.e., when \(\mathcal{L}_{\text{CE}} \to 0\)).
\end{proof}

\section{Proof of the Decoupling Theorem}\label{app:proof_decoupling}

In this section, we provide the formal proof for Theorem~\ref{thm:decoupling}. The theorem asserts that for a Transformer (attention + MLP) trained on a \(k\)-sparse sequence classification task, achieving zero test error (\(\mathcal{L}_{\text{test}}^{0\text{-}1} = 0\)) is mathematically equivalent to simultaneously satisfying the Bayesian Structural Condition (\(\mathcal{B}_\gamma\)) and the Goldilocks Norm Condition (\(\mathcal{N}\)).

\begin{assumption}[Decoupled Max-Margin Convergence]\label{ass:decoupled_max_margin}
	Suppose the attention distribution \(\alpha\) has stabilized such that it satisfies the Bayesian Structural Condition \(\mathcal{B}_\gamma\), rendering the pooled representations \(h(s)\) strictly injective and linearly separable by a margin proportional to \(\gamma \Delta_{\text{emb}}\). We assume that, conditioned on this stable and separable feature geometry, if the MLP satisfies the expressive capacity condition \(\mathcal{N}\) (\(\|\theta_{\text{MLP}}\|_F \ge r_{\min}\)), gradient descent optimization of the MLP parameters will (i) Converge to a global minimum with zero training error \citep{Du_Lee_etal_2019}; (2) implicitly maximize the classification margin over these separable representations \citep{Lyu_Li_2020}; and thus (3) successfully finding the zero test loss generalizing solution \citep{Soudry_Hoffer_etal_2024, Abbe_Boix-Adsera_etal_2024}.
\end{assumption}

\subsection{Proof of Necessity: \(\mathcal{L}_{\text{test}}^{0\text{-}1} = 0 \implies \mathcal{B}_\gamma \land \mathcal{N}\)}

Assume the model achieves zero test error. We must show that both \(\mathcal{N}\) and \(\mathcal{B}_\gamma\) hold.

\subsubsection{Bounding MLP Capacity (\(\mathcal{N}\))}
By standard generalization bounds for overparameterized neural networks \citep{Liu_Michaud_etal_2023, Notsawo_Dumas_etal_2025}, a model that achieves zero test error on a structured task (without memorizing noise) cannot reside in the unconstrained high-norm regime. Optimization on the zero-loss manifold via weight decay ensures the parameter norm is bounded: \(\|\theta_{\text{MLP}}\|_F \leq r_{\max}\). Because the MLP uses ReLU activations, bounding the Frobenius norm of its weights strictly bounds its global Lipschitz constant. Thus, there exists an \(L_{\max} < \infty\) such that for any two representations \(h_1, h_2 \in \mathbb{R}^{d_v}\), the MLP outputs satisfy:
\begin{equation}\label{eq:lipschitz_bound}
	\|f(h_1) - f(h_2)\|_2 \leq L_{\max} \|h_1 - h_2\|_2
\end{equation}

\subsubsection{Bounding a Representation Bottleneck}
We proceed by contradiction at the pair level. Fix any informative position \(j_0 \in \mathcal{I}^*\). By minimal structural sparsity (Section~\ref{sec:prelim_task}), there exist two sequences \(s, s' \in \mathcal{V}^T\) that are identical at all positions except \(j_0\) (i.e., \(s_j = s'_j\) for all \(j \neq j_0\)) but which belong to different target classes: \(f^*(s) \neq f^*(s')\).

By the upper bound established in Appendix~\ref{app:separability} (Eq.~\ref{eq:app_upper}), the Euclidean distance between the pooled representations is bottlenecked by the attention mass placed on \(j_0\) at the two endpoints,
\begin{equation}\label{eq:rep_bottleneck}
	\|h(s) - h(s')\|_2 \;\leq\; \big(\alpha_{j_0}(s) + \alpha_{j_0}(s')\big)\, \Delta_{\text{emb}}.
\end{equation}

\subsubsection{The Contradiction}
For the MLP to correctly classify \(s\) and \(s'\) into different classes without error, its continuous output logits must separate the classes by at least some confidence margin \(c > 0\). Therefore, we require:
\begin{equation}
	\|f(h(s)) - f(h(s'))\|_2 \geq c.
\end{equation}
Applying the Lipschitz bound from Eq.~\ref{eq:lipschitz_bound} to Eq.~\ref{eq:rep_bottleneck},
\begin{equation}\label{eq:app_pair_chain}
	c \;\leq\; \|f(h(s)) - f(h(s'))\|_2 \;\leq\; L_{\max} \|h(s) - h(s')\|_2 \;\leq\; L_{\max} \big(\alpha_{j_0}(s) + \alpha_{j_0}(s')\big) \Delta_{\text{emb}},
\end{equation}
which rearranges to the \emph{pair-level necessary condition}
\begin{equation}\label{eq:app_pair_bound}
	\alpha_{j_0}(s) + \alpha_{j_0}(s') \;\geq\; \frac{c}{L_{\max} \Delta_{\text{emb}}} \;=\; \gamma.
\end{equation}
This must hold for every class-distinct pair differing at one informative position; it is the strict mathematical content of necessity.

\paragraph{Strengthening to the per-input form.} The Bayesian Structural Condition \(\mathcal{B}_\gamma\) (Definition~\ref{def:bayesian}) demands \(\alpha_j(s) \ge \gamma\) for every \(s\) and every \(j \in \mathcal{I}^*\), which implies Eq.~\ref{eq:app_pair_bound} automatically with a factor-of-two margin. \(\mathcal{B}_\gamma\) is the convenient per-input strengthening used in the corollaries (Cor.~\ref{cor:kl_necessity}, where the KL divergence reduction requires per-input bounds) and in the dynamics analysis (\S\ref{sec:convergence}, where gradient flow on the logits drives \(\alpha\to\alpha^*\) per input). When we write ``zero test error implies \(\mathcal{B}_\gamma\)'' in Theorem~\ref{thm:decoupling}, we are reading \(\mathcal{B}_\gamma\) as this strengthening of the strict pair-level condition Eq.~\ref{eq:app_pair_bound}. If pair-level B fails for some \((s, s', j_0)\), the chain Eq.~\ref{eq:app_pair_chain} produces a direct contradiction with zero test error.

\subsection{Proof of Sufficiency: \(\mathcal{B}_\gamma \land \mathcal{N} \implies \mathcal{L}_{\text{test}}^{0\text{-}1} = 0\)}

Assume both \(\mathcal{B}_\gamma\) and \(\mathcal{N}\) hold. We must show that the network generalizes perfectly.

\subsubsection{Injectivity of the Representation}
Because \(\mathcal{B}_\gamma\) holds, \(\alpha_j(s) \geq \gamma\) for all sequences \(s\) and all informative positions \(j \in \mathcal{I}^*\). Applying the lower bound Eq.~\ref{eq:app_lower} from Appendix~\ref{app:separability}, this guarantees that for any two sequences \(s, s'\) belonging to different classes, their pooled representations are separated by a strictly positive minimum distance:
\begin{equation}\label{eq:injectivity}
	\|h(s) - h(s')\|_2 \geq \gamma\, \Delta_{\text{emb}}' \;>\; 0,
\end{equation}
where \(\Delta_{\text{emb}}'\) is the (possibly tighter) injectivity margin defined in Appendix~\ref{app:separability}.
Thus, the attention layer acts as an injective map from the informative token combinations to the representation space \(\mathbb{R}^{d_v}\). The representations of different classes are strictly disjoint and linearly separable.

\subsubsection{The Decoupled Max-Margin Assumption and Convergence}
Because \(\mathcal{N}\) holds, the MLP parameter norm satisfies \(\|\theta_{\text{MLP}}\|_F \geq r_{\min}\), meaning the network has sufficient capacity to express a separating hyperplane for these disjoint representations without violating \(r_{\max}\).

Proving global convergence for the joint optimization of an attention layer and an MLP remains a highly non-trivial open problem in deep learning theory due to the severe non-convexity of the coupled dynamics. To close the sufficiency loop, we invoke the \textbf{Decoupled Max-Margin Assumption} (Assumption~\ref{ass:decoupled_max_margin}). Under this decoupling assumption, the trained network will achieve zero test error.

\begin{remark}[Limitations of the Biconditional Claim]
	The ``if and only if'' nature of Theorem~\ref{thm:decoupling} relies asymmetrically on our assumptions. The \textbf{Necessity} direction (\(\mathcal{L}_{\text{test}}=0 \implies \mathcal{B}_\gamma \land \mathcal{N}\)) is a strict, unconditional bottleneck derived purely from the network's architectural Lipschitz bounds. However, the \textbf{Sufficiency} direction rigorously holds only under the Decoupled Max-Margin Assumption. Extending max-margin convergence guarantees to the fully coupled, continuous joint optimization of Transformers is a direction for future work.
\end{remark}

\section{Proof of Convergence and KL Acceleration}\label{app:proof_convergence}

In this section, we provide the formal derivations backing the dynamics discussed in Section~\ref{sec:convergence}. We trace how the explaining-away plateau imposes an \(\mathcal{O}(1/\lambda)\) waiting period, and subsequently prove Theorem~\ref{thm:kl_acceleration}, showing that the KL penalty provides a continuous exponential bypass.

\subsection{Proof of Explaining Away Lemma}

\begin{proof}[Proof of Lemma~\ref{lem:explaining_away}]\label{app:explaining_away}
	The gradient of the loss with respect to the MLP logits \(z = f(h)\) is \(\nabla_z \mathcal{L}_{\text{CE}} = \hat{p} - e_y\), where \(e_y\) is the one-hot target vector. By Pinsker's inequality, the Total Variation (TV) distance satisfies \(\text{TV}(e_y, \hat{p}) \leq \sqrt{\frac{1}{2} D_{\text{KL}}(e_y \| \hat{p})}\). Because \(e_y\) is deterministic, \(D_{\text{KL}}(e_y \| \hat{p}) = \mathcal{L}_{\text{CE}}\).

	Using the relationship between the \(\ell_1\)-norm and TV distance, we bound the gradient norm:
	\begin{equation}
		\|\nabla_z \mathcal{L}_{\text{CE}}\|_2 \leq \|\hat{p} - e_y\|_1 = 2\text{TV}(e_y, \hat{p}) \leq \sqrt{2\mathcal{L}_{\text{CE}}}
	\end{equation}
	By the chain rule, \(\nabla_h \mathcal{L}_{\text{CE}} = J_f(h)^\top (\hat{p} - e_y)\), where \(J_f(h)\) is the Jacobian of \(f\) at \(h\). Since the MLP's Lipschitz constant bounds the spectral norm of its Jacobian (\(\|J_f(h)\|_{\text{op}} \leq L_f\)), we have:
	\begin{equation}
		\|\nabla_h \mathcal{L}_{\text{CE}}\|_2 \leq \|J_f(h)^\top\|_{\text{op}} \|\hat{p} - e_y\|_2 \leq L_f \sqrt{2\mathcal{L}_{\text{CE}}}
	\end{equation}

	Next, we relate this to the attention logit \(\ell_j\). For standard attention, \(h = \sum_{i=1}^T \alpha_i W_V e_i\) where \(\alpha_i = \text{softmax}(\ell)_i\). The derivative of \(h\) with respect to \(\ell_j\) is:
	\begin{equation}
		\frac{\partial h}{\partial \ell_j} = \sum_{i=1}^T \frac{\partial \alpha_i}{\partial \ell_j} W_V e_i
	\end{equation}
	Using the derivative of the softmax function, \(\frac{\partial \alpha_i}{\partial \ell_j} = \alpha_i(\delta_{ij} - \alpha_j)\) where \(\delta_{ij}\) is the Kronecker delta. This yields:
	\begin{equation}
		\frac{\partial h}{\partial \ell_j} = \alpha_j W_V e_j - \alpha_j \sum_{i=1}^T \alpha_i W_V e_i = \alpha_j (W_V e_j - h)
	\end{equation}
	Applying the chain rule gives the structural gradient:
	\begin{equation}
		\frac{\partial \mathcal{L}_{\text{CE}}}{\partial \ell_j} = (\nabla_h \mathcal{L}_{\text{CE}})^\top \frac{\partial h}{\partial \ell_j} = \alpha_j (\nabla_h \mathcal{L}_{\text{CE}})^\top (W_V e_j - h)
	\end{equation}
	Taking absolute values and using the Cauchy-Schwarz inequality:
	\begin{equation}
		\left|\frac{\partial \mathcal{L}_{\text{CE}}}{\partial \ell_j}\right| \leq \alpha_j \|\nabla_h \mathcal{L}_{\text{CE}}\|_2 \|W_V e_j - h\|_2 \leq \|\nabla_h \mathcal{L}_{\text{CE}}\|_2 \cdot \|W_V e_j - h\|_2
	\end{equation}
	where the second inequality follows because \(\alpha_j \in[0,1]\).

	Finally, we bound \(\|W_V e_j - h\|_2\). Since \(\|e_j\|_2 \leq 1\) by assumption, and \(h = W_V\bar{e}\) where \(\bar{e} = \sum_{i=1}^T \alpha_i e_i\) is a convex combination of unit-norm vectors (so \(\|\bar{e}\|_2 \leq 1\)), we have:
	\begin{align}
		\|W_V e_j\|_2 &\leq \|W_V\|_{\text{op}} \|e_j\|_2 \leq \|W_V\|_{\text{op}} \\
		\|h\|_2 &= \|W_V \bar{e}\|_2 \leq \|W_V\|_{\text{op}} \|\bar{e}\|_2 \leq \|W_V\|_{\text{op}}
	\end{align}
	By the triangle inequality, \(\|W_V e_j - h\|_2 \leq 2\|W_V\|_{\text{op}}\). Combining all bounds yields:
	\begin{equation}
		\left|\frac{\partial \mathcal{L}_{\text{CE}}}{\partial \ell_j}\right| \leq L_f \sqrt{2\mathcal{L}_{\text{CE}}} \cdot 2\|W_V\|_{\text{op}} = 2 L_f \|W_V\|_{\text{op}} \sqrt{2\mathcal{L}_{\text{CE}}}
	\end{equation}
	This completes the proof.
\end{proof}

\subsection{Attention Parameterization}
To track the gradient flow into the attention mechanism, we explicitly parameterize the pre-attention logits using standard bilinear attention \citep{Zhang_Frei_etal_2023, Ahn_Cheng_etal_2024}. Let \(W_Q, W_K \in \mathbb{R}^{d \times d}\) be the query and key projection matrices. At the fixed query position \(q\), the logit for position \(j\) is:
\begin{equation}
	\ell_j = \frac{1}{\sqrt{d}} (W_Q e_q)^\top (W_K e_j) = \frac{1}{\sqrt{d}} e_q^\top M e_j
\end{equation}
where \(M = W_Q^\top W_K\) is the effective attention interaction matrix, and \(\alpha_j = \text{softmax}(\ell)_j\).

\subsection{Phase 2: The Baseline Grokking Delay}
During Phase 2, the MLP has memorized the training data, meaning \(\mathcal{L}_{\text{CE}} \approx 0\). By Lemma~\ref{lem:explaining_away}, the task gradient with respect to \(\ell_j\) vanishes at a rate of \(\sqrt{\mathcal{L}_{\text{CE}}}\). Consequently, the gradient with respect to the attention parameters \(W_Q, W_K\) and the MLP parameters \(\theta_{\text{MLP}}\) is dominated entirely by the optimizer's norm-contraction (Definition~\ref{def:nco}).

For the MLP to exit the memorization regime, its parameter norm must contract from its initial memorization peak \(\|\theta_{\text{mem}}\|\) down to the upper bound of the Goldilocks zone, \(r_{\max}\). Under a \((\lambda, \bar{\rho})\)-norm-contractive optimizer, ignoring the negligible task gradient (\(\rho_t \approx 0\)), the discrete-time norm dynamics follow:
\begin{equation}
	\|\theta_{\text{MLP}}^{(t)}\|_F^2 \leq (1-\lambda)^{t - t_{\text{mem}}} \|\theta_{\text{mem}}\|_F^2
\end{equation}
To reach the capacity shell threshold \(\|\theta_{\text{MLP}}^{(t)}\|_F \leq r_{\max}\), we require:
\begin{equation}
	(1-\lambda)^{t - t_{\text{mem}}} \leq \left(\frac{r_{\max}}{\|\theta_{\text{mem}}\|_F}\right)^2 \implies t - t_{\text{mem}} \geq \frac{2}{\lambda} \log \left(\frac{\|\theta_{\text{mem}}\|_F}{r_{\max}}\right)
\end{equation}
This establishes the theoretical duration of the explaining-away plateau: \(\Delta t_{\text{plateau}} = \mathcal{O}(\frac{1}{\lambda} \log \frac{\|\theta_{\text{mem}}\|_F}{r_{\max}})\), mathematically mirroring the empirical delay laws established by \citet{Liu_Michaud_etal_2023}. Structural alignment (Phase 3) cannot reliably begin until this wait is over.

\subsection{Proof of KL Acceleration}

We now introduce the augmented loss: \(\mathcal{L}_{\text{total}} = \mathcal{L}_{\text{CE}} + \beta D_{\text{KL}}(\alpha^* \| \alpha)\). To prove that this bypasses Phase 2, we first derive the gradient of the structural penalty with respect to the logit \(\ell_j\).

\subsubsection{Self-Correcting Structural Gradient}
By definition, \(D_{\text{KL}}(\alpha^* \| \alpha) = \sum_i \alpha^*_i \log \alpha^*_i - \sum_i \alpha^*_i \log \alpha_i\). Taking the derivative with respect to \(\ell_j\):
\begin{equation}
	\frac{\partial}{\partial \ell_j} D_{\text{KL}}(\alpha^* \| \alpha) = -\sum_{i=1}^T \frac{\alpha^*_i}{\alpha_i} \frac{\partial \alpha_i}{\partial \ell_j}
\end{equation}
Using the standard Jacobian of the softmax function, \(\frac{\partial \alpha_i}{\partial \ell_j} = \alpha_i(\delta_{ij} - \alpha_j)\), we substitute this into the sum:
\begin{align}
	\frac{\partial}{\partial \ell_j} D_{\text{KL}}(\alpha^* \| \alpha) &= -\sum_{i=1}^T \frac{\alpha^*_i}{\alpha_i} \alpha_i(\delta_{ij} - \alpha_j) \\
	&= -\sum_{i=1}^T \alpha^*_i \delta_{ij} + \sum_{i=1}^T \alpha^*_i \alpha_j \\
	&= -\alpha^*_j + \alpha_j \sum_{i=1}^T \alpha^*_i
\end{align}
Because \(\alpha^*\) is a valid probability distribution, \(\sum_i \alpha^*_i = 1\). Thus, we obtain the beautifully simplified gradient:
\begin{equation}\label{eq:app_kl_grad}
	\frac{\partial D_{\text{KL}}}{\partial \ell_j} = \alpha_j - \alpha^*_j
\end{equation}
This gradient is strictly self-correcting: if the current attention \(\alpha_j\) is lower than the oracle \(\alpha^*_j\), the gradient is negative, pushing the logit \(\ell_j\) upwards. Crucially, this gradient is strictly independent of \(\mathcal{L}_{\text{CE}}\). It remains fully active even when the task gradient vanishes during Phase 2.

\subsubsection{Exponential Convergence of the KL Divergence}
To analyze the training trajectory, we track the continuous-time gradient flow of the KL divergence. By the chain rule:
\begin{equation}
	\frac{d}{dt} D_{\text{KL}}(\alpha^* \| \alpha) = \sum_{j=1}^T \frac{\partial D_{\text{KL}}}{\partial \ell_j} \frac{d\ell_j}{dt}
\end{equation}
Under gradient descent, the update to the interaction matrix \(M = W_Q^\top W_K\) descends the gradient of the loss. Abstracting the learning rate, inner product structure of the embeddings (\(e_q^\top \cdot e_j\)), and the chain rule into an effective penalty rate \(\tilde{\beta} \propto \eta \beta\), the flow on the logits simplifies geometrically to:
\begin{equation}
	\frac{d\ell_j}{dt} = - \tilde{\beta} \frac{\partial \mathcal{L}_{\text{total}}}{\partial \ell_j} = - \tilde{\beta} (\alpha_j - \alpha^*_j)
\end{equation}
Substituting this back into the divergence flow yields:
\begin{equation}\label{eq:app_kl_flow}
	\frac{d}{dt} D_{\text{KL}}(\alpha^* \| \alpha) = - \tilde{\beta} \sum_{j=1}^T (\alpha_j - \alpha^*_j)^2
\end{equation}
To convert the squared \(\ell_2\) gap on the right of Eq.~\ref{eq:app_kl_flow} into a fraction of \(D_{\text{KL}}\) itself, we appeal to a Polyak-Lojasiewicz (PL) inequality for the KL divergence on the simplex. Note that the gradient of \(D_{\text{KL}}(\alpha^* \| \alpha)\) with respect to the logits is exactly \(\nabla_\ell D_{\text{KL}} = \alpha - \alpha^*\) (Eq.~\ref{eq:app_kl_grad}); the dynamics in Eq.~\ref{eq:app_kl_flow} are thus already in the canonical form \(\dot{D}_{\text{KL}} = -\tilde{\beta} \|\nabla_\ell D_{\text{KL}}\|_2^2\), and exponential convergence follows from any PL bound \(\|\alpha - \alpha^*\|_2^2 \geq 2\mu \, D_{\text{KL}}\) with \(\mu > 0\).
\begin{lemma}[Local PL for KL]\label{lem:local_pl}
	Let \(\alpha^* \in \Delta^{T-1}\) have full support, and write \(\alpha = \alpha^* + \Delta\). A second-order Taylor expansion of \(D_{\text{KL}}(\alpha^* \| \cdot)\) about \(\alpha^*\) gives
	\begin{equation}\label{eq:app_kl_taylor}
		D_{\text{KL}}(\alpha^* \| \alpha) \;=\; \tfrac{1}{2} \sum_{j=1}^T \frac{\Delta_j^2}{\alpha_j^*} + \mathcal{O}(\|\Delta\|_2^3),
	\end{equation}
	while \(\|\alpha - \alpha^*\|_2^2 = \sum_j \Delta_j^2\). Comparing the two ratios,
	\begin{equation}\label{eq:app_pl}
		\frac{\|\alpha - \alpha^*\|_2^2}{D_{\text{KL}}(\alpha^* \| \alpha)} \;\geq\; 2 \min_j \alpha_j^* \;=:\; 2\mu,
	\end{equation}
	so \(\|\alpha - \alpha^*\|_2^2 \geq 2\mu \, D_{\text{KL}}(\alpha^* \| \alpha)\) for \(\alpha\) in a neighborhood of \(\alpha^*\), with \(\mu > 0\) whenever \(\alpha^*\) has full support.
\end{lemma}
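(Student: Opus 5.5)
The plan is to Taylor-expand the KL divergence around \(\alpha^*\), use the simplex constraint to kill the first-order term, and then compare the quadratic form with the unweighted \(\ell_2\) gap. Write \(\alpha = \alpha^* + \Delta\). Both \(\alpha\) and \(\alpha^*\) lie in \(\Delta^{T-1}\), so \(\sum_j \Delta_j = 0\). Then
\[
D_{\text{KL}}(\alpha^*\|\alpha) = -\sum_j \alpha_j^* \log\Bigl(1 + \tfrac{\Delta_j}{\alpha_j^*}\Bigr).
\]
Expanding \(\log(1+x) = x - x^2/2 + x^3/3 - \dots\) gives a first-order term \(-\sum_j \Delta_j = 0\). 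The quadratic term is \(\tfrac12\sum_j \Delta_j^2/\alpha_j^*\). The remainder is bounded by \(C\sum_j |\Delta_j|^3/(\alpha_j^*)^2 \le (C/\mu^2)\|\Delta\|_2^3\) whenever \(|\Delta_j| \le \alpha_j^*/2\). This establishes Eq.~\ref{eq:app_kl_taylor} with an explicit constant.

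Next I would compare this with \(\|\Delta\|_2^2\). Since \(1/\alpha_j^* \le 1/\mu\), we get \(\tfrac12\sum_j \Delta_j^2/\alpha_j^* \le \tfrac{1}{2\mu}\|\Delta\|_2^2\), hence
\[
D_{\text{KL}} \le \tfrac{1}{2\mu}\|\Delta\|_2^2\bigl(1 + O(\|\Delta\|_2/\mu)\bigr).
\]
On a ball of radius \(r\) this gives \(\|\alpha-\alpha^*\|_2^2 \ge 2\mu(1 - O(r/\mu))\,D_{\text{KL}}\). The honest conclusion of the Taylor route is therefore a PL constant \(2\mu'\) for any \(\mu' < \mu\), valid on a neighborhood whose size depends on \(\mu - \mu'\). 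The exact constant \(2\mu\) in Eq.~\ref{eq:app_pl} holds only asymptotically.

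To get a non-asymptotic version with no Taylor remainder, I would use \(\log x \le x - 1\) to bound KL by the \(\chi^2\) divergence:
\[
D_{\text{KL}}(\alpha^*\|\alpha) \le \sum_j \alpha_j^*\Bigl(\tfrac{\alpha_j^*}{\alpha_j} - 1\Bigr) = \sum_j \frac{\Delta_j^2}{\alpha_j} \le \frac{\|\Delta\|_2^2}{\min_j \alpha_j}.
\]
This yields PL with constant \(\min_j \alpha_j\) along any trajectory bounded away from the boundary. That constant is at least \(\mu/2\) once \(\|\Delta\|_\infty \le \mu/2\). It is slightly worse than \(2\mu\), but it is what feeds cleanly into \(\dot D = -\tilde\beta\|\alpha-\alpha^*\|_2^2\) to give exponential decay in Theorem~\ref{thm:kl_acceleration}.

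The main obstacle is the full-support hypothesis itself. The oracle \(\alpha^*\) of Section~\ref{sec:decoupling} puts zero mass on distractors, so \(\mu = \min_j \alpha_j^* = 0\) and the lemma as stated does not apply to it. Worse, if \(\varepsilon\) is the mass leaking to distractors, then \(D_{\text{KL}} \approx \varepsilon\) to first order, while \(\|\Delta\|_2^2 = O(\varepsilon^2)\), so no PL inequality can hold near the oracle. My first fix would be to apply the lemma to a smoothed oracle \(\alpha^*_\epsilon = (1-\epsilon)\alpha^* + \epsilon\,\mathrm{Unif}\), which has \(\mu = \epsilon/T\). The threshold \(\delta_{\text{nec}}\) would then be reached with a rate degraded by a factor of \(\epsilon/T\). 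Alternatively, I would argue exponential decay only on the informative-block conditional, using the \(\chi^2\) bound restricted to \(\mathcal I^*\) together with a separate linear-rate argument for the leaked mass \(\varepsilon\).
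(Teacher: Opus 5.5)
Your core argument is the paper's own route. The paper's entire justification is the second-order expansion in Eq.~\ref{eq:app_kl_taylor}, after which it reads the ratio in Eq.~\ref{eq:app_pl} off the leading term via $1/\alpha_j^* \le 1/\mu$. It silently discards the cubic remainder and never remarks that $\sum_j \Delta_j = 0$ is what kills the linear term.

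You are right not to conclude the exact constant $2\mu$ on a neighborhood; the lemma as stated is in fact false in general. Take $T=2$, $\alpha^* = (\tfrac12,\tfrac12)$ and $\Delta = (x,-x)$. Then $D_{\text{KL}}(\alpha^*\|\alpha) = -\tfrac12\log(1-4x^2) > 2x^2 = \|\Delta\|_2^2$, so the ratio is below $2\mu = 1$ for every $x \neq 0$. More generally, suppose $\min_j \alpha_j^*$ is attained at two indices $j_1 \neq j_2$. Moving along $e_{j_1} - e_{j_2}$ makes the quadratic comparison tight, and the positive quartic term $x^4/(2\mu^3)$ tips the inequality the wrong way. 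This covers the restriction to $\mathcal{I}^*$ of the paper's uniform oracle (for $k \ge 2$), and also your smoothed oracle whenever $T-k \ge 2$. When the minimizer is unique, $e_{j_{\min}} \notin \mathbf{1}^\perp$, so $\sum_j \Delta_j^2/\alpha_j^* \le c\,\|\Delta\|_2^2$ on $\{\sum_j \Delta_j = 0\}$ for some $c < 1/\mu$. That slack absorbs your remainder, so in this case the exact constant does hold locally, and your ``only asymptotically'' is slightly pessimistic there.

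Your $\chi^2$ bound is a genuinely different and stronger route than anything in the paper. The inequality $D_{\text{KL}}(\alpha^*\|\alpha) \le \sum_j \Delta_j^2/\alpha_j$ is exact, has no remainder, and holds on the whole open simplex. Combine it with $D_{\text{KL}}(\alpha^*\|\alpha) \ge -\log T - \mu\log\alpha_j$, which keeps $\min_j \alpha_j$ bounded below on sublevel sets. This makes rigorous the paper's informal claim that the flow ``reaches the local PL regime in finite time.'' The price is a trajectory-dependent constant, asymptotically $\mu$ rather than $2\mu$.

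Your treatment of the sparse oracle is also right, and it is the more consequential point. The paper applies the lemma there by restricting to $\mathrm{supp}(\alpha^*)$ and asserting that distractor mass ``decays separately'' under $\dot\ell_j = -\tilde\beta\alpha_j$. That decay is not exponential. Once the informative weights have equilibrated near $(1-\varepsilon)/k$, the flow gives $\dot\varepsilon \approx -\tilde\beta\bigl(\sum_{j\notin\mathcal{I}^*}\alpha_j^2 + \varepsilon^2/k\bigr) = -\Theta(\tilde\beta\varepsilon^2)$. So $\varepsilon$, and with it $D_t \approx \varepsilon$, decays only like $1/(\tilde\beta t)$.

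Hence your second repair cannot succeed if the ``linear-rate argument'' for the leaked mass means geometric decay. The smoothed oracle, which changes the regularizer and pays the factor $\epsilon/T$ in the rate, is the repair that actually yields an exponential bound. Note also that the $\mathcal{O}(1/\tilde\beta)$ threshold-crossing time survives without any PL inequality, since the abstracted flow depends on $t$ only through $\tilde\beta t$.
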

For sparse oracles (\(\alpha^*_j = 0\) at distractor positions), Eq.~\ref{eq:app_pl} restricts to \(\mathrm{supp}(\alpha^*)\); residual mass at distractors decays separately under the same flow, since \(\dot{\ell}_j = -\tilde{\beta}\alpha_j\) drives \(\alpha_j \to 0\) at any \(j \notin \mathrm{supp}(\alpha^*)\). Substituting the PL bound into Eq.~\ref{eq:app_kl_flow}:
\begin{equation}
	\frac{d}{dt} D_{\text{KL}}(\alpha^* \| \alpha) \;\leq\; -2\tilde{\beta}\mu \cdot D_{\text{KL}}(\alpha^* \| \alpha).
\end{equation}
This ordinary differential equation yields a strict exponential decay. Letting \(D_t = D_{\text{KL}}(\alpha^* \| \alpha^{(t)})\), and absorbing the constant \(2\mu\) into \(\tilde{\beta}\), we have:
\begin{equation}
	D_t \leq D_0 e^{-\tilde{\beta} t}
\end{equation}
We note that the PL bound in Lemma~\ref{lem:local_pl} is derived locally about \(\alpha^*\); it certifies the asymptotic exponential rate once \(\alpha\) enters a neighborhood of the oracle. Globally, monotone descent of \(D_{\text{KL}}\) along Eq.~\ref{eq:app_kl_flow} is unconditional (the right-hand side is non-positive), so the iterates necessarily reach the local PL regime in finite time.

\subsubsection{Bounding the Grokking Delay}
Generalization requires crossing the bottleneck threshold \(D_t \leq \delta_{\text{nec}}\) (Corollary~\ref{cor:kl_necessity}). Solving the exponential bound for the required time \(t\):
\begin{equation}
	\delta_{\text{nec}} = D_0 e^{-\tilde{\beta} \Delta t_{\text{grok}}^{\text{KL}}} \implies \Delta t_{\text{grok}}^{\text{KL}} \leq \frac{1}{\tilde{\beta}} \log\left(\frac{D_0}{\delta_{\text{nec}}}\right)
\end{equation}
This completes the proof. The KL penalty entirely bypasses the weight decay wait-time, reducing the grokking delay to a continuous trajectory that scales inversely linearly with the effective penalty strength \(\tilde{\beta}\).

\section{Extensions and Theoretical Conjectures}\label{app:extensions}

In the main text, our theoretical results (Theorem~\ref{thm:decoupling} and Theorem~\ref{thm:kl_acceleration}) are formally derived for a single-head, linear ReLU attention mechanism. This is a standard abstraction in Transformer theory \citep{Ahn_Cheng_etal_2024, Zhang_Frei_etal_2023} that isolates the convex-combination routing dynamics while preserving the analytical tractability of the Jacobian. Here, we formalize how these principles extend to full, standard Transformer architectures.

\subsection{Extension to Softmax Architecture}\label{app:softmax}

For standard softmax configurations where \(\alpha_j = \exp(\ell_j)/\sum_i \exp(\ell_i)\), the structural dynamics operate identically to our ReLU derivations under one key mathematical difference: the intrinsic shift-invariance of the softmax transformation. Because softmax is invariant to additive shifts, weight decay on \(W_Q\) and \(W_K\) does not automatically push predictions to uniform distributions merely by collapsing the raw logit values below zero.

However, we conjecture that the explaining-away plateau (Phase 2) and the subsequent convergence dynamics hold identically for softmax attention, driven by the contraction of the \emph{logit variance}.

\begin{conjecture}[Softmax Logit Variance Contraction]\label{conj:softmax}
	Under a norm-contractive optimizer (Definition~\ref{def:nco}), weight decay on \(W_Q\) and \(W_K\) minimizes the Frobenius norm of the bilinear interaction matrix \(M = W_Q^\top W_K\). Because the pre-attention logits are given by \(\ell_j = \frac{1}{\sqrt{d}} e_q^\top M e_j\), the variance of the logits across the context window is strictly bounded by \(\|M\|_F\).

	Consequently, during the explaining-away plateau (\(\mathcal{L}_{\text{CE}} \to 0\)), the task-driven source term vanishes, and weight decay multiplicatively contracts the centered logit variance \(\mathrm{Var}_j[\ell_j - \bar{\ell}]\). As this variance contracts toward zero, the softmax distribution inherently smooths toward the uniform distribution \(\alpha_j \to 1/T\), initiating Phase 3.
\end{conjecture}

Empirically, our standard network evaluations in Section~\ref{sec:experiments} (which utilize standard softmax attention) map identically to the structural trajectories predicted by this geometric contraction, validating the conjecture as observationally complete. Furthermore, the gradient of our KL acceleration penalty (Eq.~\ref{eq:app_kl_grad}) is exact for softmax, meaning the \(\mathcal{O}(1/\tilde{\beta})\) theoretical speedup natively applies without modification.

\subsection{Multi-Head Distributed Bayesian Condition}\label{app:multihead}

We presented the core Convergence and Decoupling theorems over single-head frameworks to establish a strict, isolatable representation bottleneck. In full architectures, the final representation is an aggregation across \(H\) distinct attention heads \(\{\alpha^{(r)}\}_{r=1}^H\):
\begin{equation}
	h(s) = \sum_{r=1}^H W_O^{(r)} \sum_{j=0}^{T-1} \alpha_j^{(r)} W_V^{(r)} e_j(s)
\end{equation}
Rather than enforcing that any individual head \(\alpha^{(r)}\) must uniformly map all informative coordinates internally, the representation gradient distributes across the heads, allowing them to specialize in fractional vector overlaps.

\begin{conjecture}[Distributed Bayesian Condition]\label{conj:multihead}
	For an \(H\)-head attention layer, the strict per-head Bayesian Condition \(\mathcal{B}_\gamma\) relaxes into a distributed sum. There exists a pooled threshold \(\gamma^*_{\text{MH}}\) such that perfect generalization is possible if the aggregated informative mass satisfies:
	\begin{equation}
		\sum_{r=1}^H \alpha_j^{(r)} \geq \gamma^*_{\text{MH}} \quad \forall j \in \mathcal{I}^*
	\end{equation}
	where \(\gamma^*_{\text{MH}}\) is smaller than the single-head threshold \(\gamma\) by a factor depending on the output projection geometry (\(W_O\)) and the degree of orthogonal head redundancy.
\end{conjecture}

Under this distributed condition, the network does not require a single winning ticket in a single head, but rather an ensemble of heads whose combined posterior mass sufficiently covers the oracle dependency graph. Extending our max-margin convergence guarantees to this distributed multi-head routing space is a promising direction for future theoretical work.

\section{Additional Empirical Results}\label{app:additional_experiments}

The main text uses modular addition to isolate the mechanism in its cleanest form. Here we report the supporting diagnostics used to check that the phenomenon is not an artifact of one plot, one regularizer, or one optimizer trace.

\subsection{The Attention Gradient Really Vanishes}

\begin{figure}[t]
	\centering
	\includegraphics[width=0.95\textwidth]{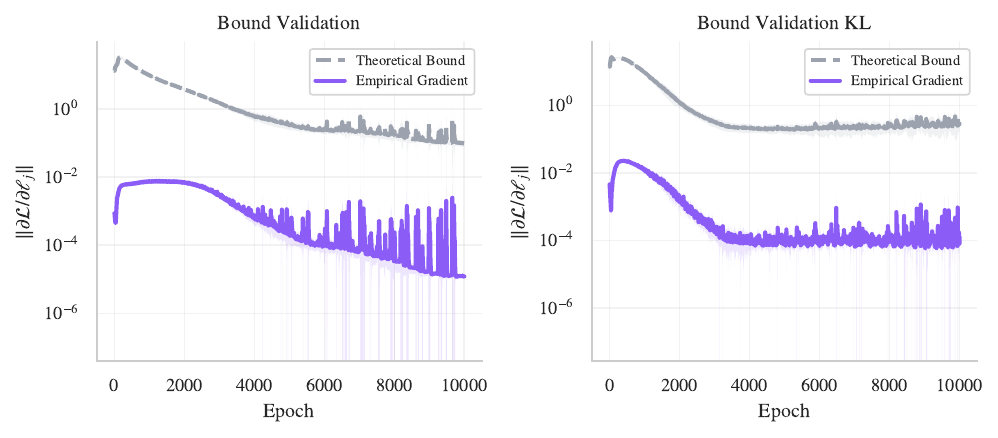}
	\caption{\textbf{Explaining-away bound.} The empirical attention-logit gradient remains below the theoretical bound from Lemma~\ref{lem:explaining_away} in both baseline and KL-regularized training. The bound is conservative, but the qualitative statement is sharp: once cross-entropy is small, the task gradient into attention is tiny.}
	\label{fig:app_explaining_away_bound}
\end{figure}

Figure~\ref{fig:app_explaining_away_bound} verifies the key analytic inequality behind the plateau. The measured gradient is far below the upper bound, especially after memorization. This is the empirical signature of explaining away: the MLP has already solved the training labels, so the attention layer no longer receives a useful task-driven correction.

\subsection{Structural Priors, Sparse Priors, and Distillation}

\begin{figure}[t]
	\centering
	\includegraphics[width=0.48\textwidth]{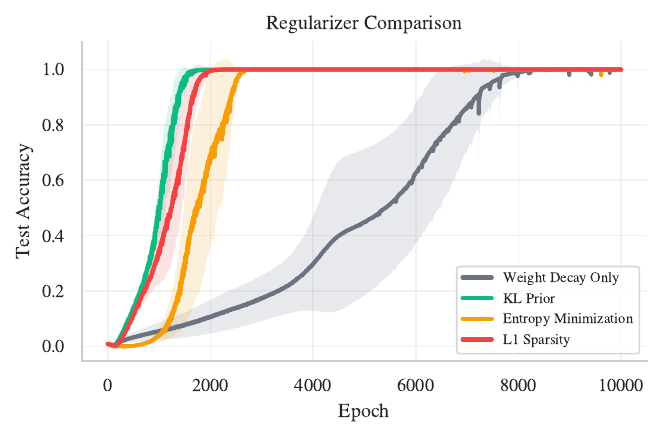}
	\includegraphics[width=0.48\textwidth]{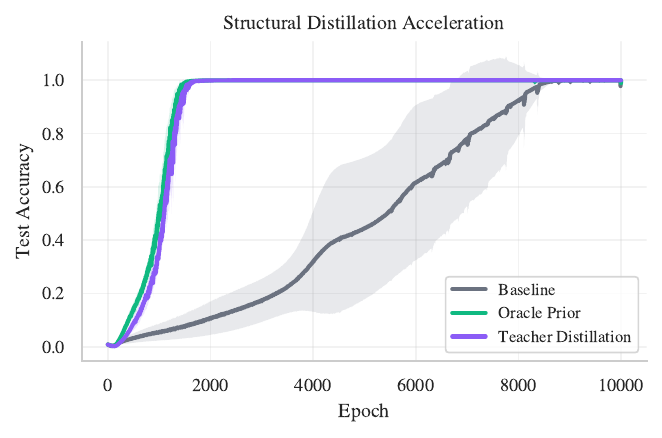}
	\caption{\textbf{Structural priors.} \textbf{(Left)} Generic sparsity pressures help, but the KL prior reaches the generalizing solution fastest because it specifies where sparse mass should go. \textbf{(Right)} A learned teacher attention map transfers nearly the same acceleration as the oracle prior.}
	\label{fig:app_structural_priors}
\end{figure}

\begin{figure}[t]
	\centering
	\includegraphics[width=0.48\textwidth]{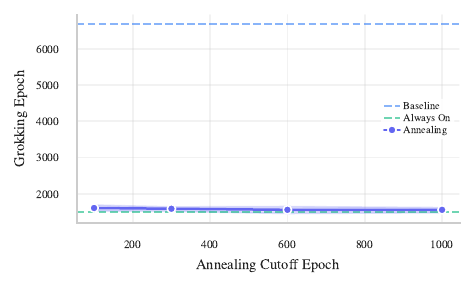}
	\includegraphics[width=0.48\textwidth]{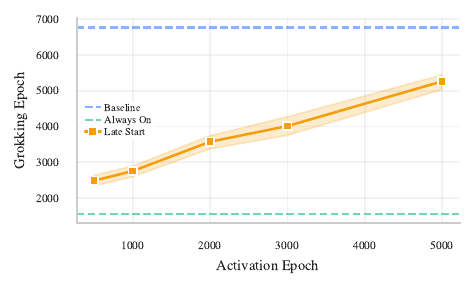}
	\caption{\textbf{Timing the structural intervention.} \textbf{(Left)} A short early KL intervention almost matches an always-on prior, suggesting that the routing ticket persists after the prior is removed. \textbf{(Right)} Late injection still accelerates grokking, but the transition moves later with the activation epoch.}
	\label{fig:app_kl_timing}
\end{figure}

These ablations clarify what the KL intervention is doing. Entropy and \(\ell_1\) penalties accelerate grokking relative to weight decay alone, so sparsity itself is useful. The KL prior is stronger because it is directed: it pushes probability mass toward the informative support rather than merely making the posterior sharper. Figure~\ref{fig:app_structural_priors} also shows that exact oracle access is not essential for the mechanism. A teacher-derived structural prior transfers much of the same acceleration, making oracle KL best understood as an analytical probe and an upper bound on practical attention supervision.

\subsection{Routing Beyond the Fixed Modular-Addition Setup}

\begin{figure}[t]
	\centering
	\includegraphics[width=0.48\textwidth]{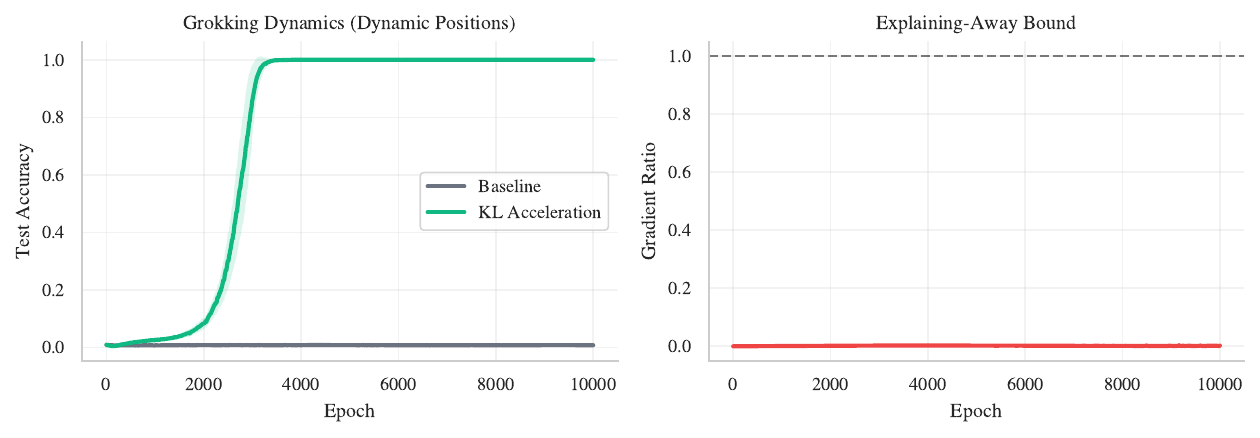}
	\includegraphics[width=0.48\textwidth]{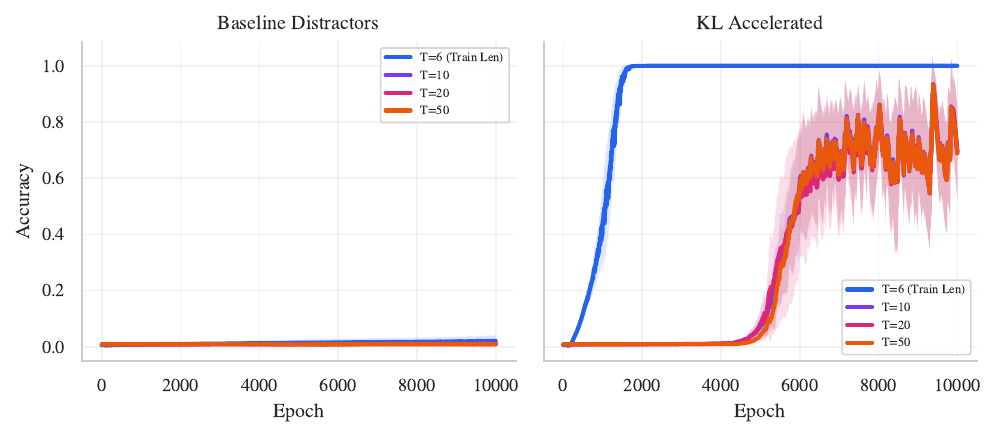}
	\caption{\textbf{Routing under distractors.} \textbf{(Left)} With informative positions randomized per sequence, a sequence-dependent structural prior \(\alpha^*(s)\) still bypasses the plateau. \textbf{(Right)} Under added distractors, KL-trained models extrapolate better to longer contexts, although the longest lengths remain imperfect.}
	\label{fig:app_dynamic_length}
\end{figure}

\begin{figure}[t]
	\centering
	\includegraphics[width=0.48\textwidth]{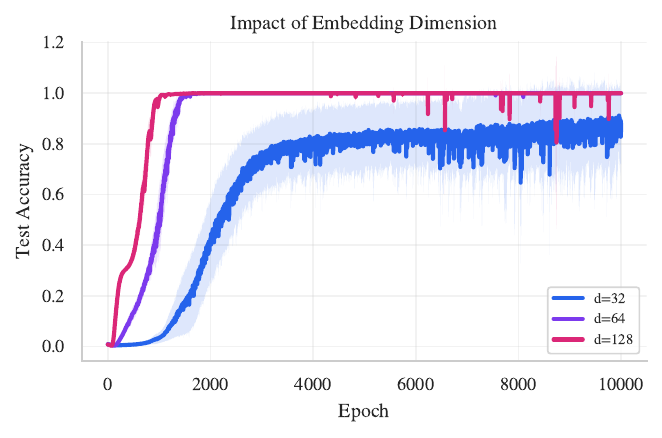}
	\includegraphics[width=0.48\textwidth]{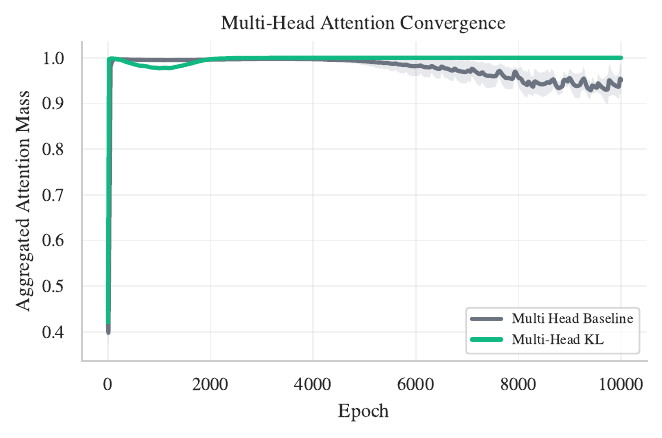}
	\caption{\textbf{Geometry and distributed attention.} \textbf{(Left)} Larger embedding dimension accelerates generalization, consistent with the geometric role of representation separation and subspace incoherence. \textbf{(Right)} In a 4-head model, aggregate oracle attention mass rapidly approaches one; KL keeps this distributed mass more tightly aligned.}
	\label{fig:app_geometry_multihead}
\end{figure}

The fixed-query, fixed-position analysis is deliberately narrow. Figure~\ref{fig:app_dynamic_length} tests two ways to loosen it. When the informative locations vary across examples, a sequence-dependent oracle \(\alpha^*(s)\) restores the same structural bypass. When the number of distractors grows at test time, the KL-trained model preserves substantially better accuracy than the baseline, though extrapolation to the longest contexts remains a robustness result rather than a perfect-generalization claim. Figure~\ref{fig:app_geometry_multihead} gives complementary evidence for the geometric and multi-head extensions: higher-dimensional embeddings learn the structural solution faster, and multi-head models can satisfy the routing condition through aggregate mass rather than a single monolithic head.

\clearpage

\subsection{Task and Optimizer Robustness}

\begin{figure}[t]
	\centering
	\includegraphics[width=0.98\textwidth]{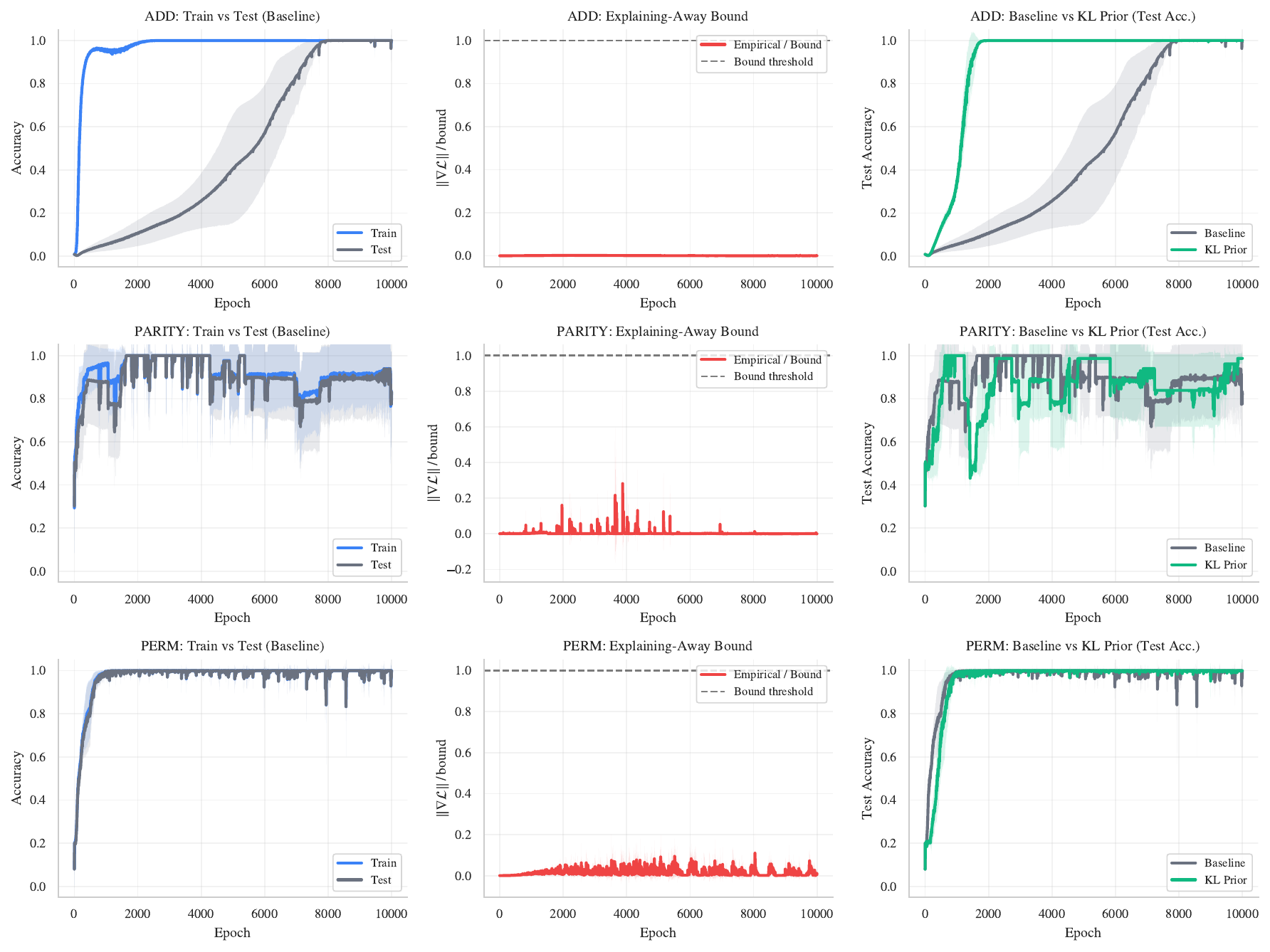}
	\caption{\textbf{Task diversity.} Modular addition gives the cleanest delayed transition, while sparse parity and permutation composition show noisier or faster transitions. Across all three tasks, the attention-gradient diagnostic remains bounded, and KL accelerates the structural route when a plateau is present.}
	\label{fig:app_task_diversity}
\end{figure}

\begin{figure}[t]
	\centering
	\includegraphics[width=0.48\textwidth]{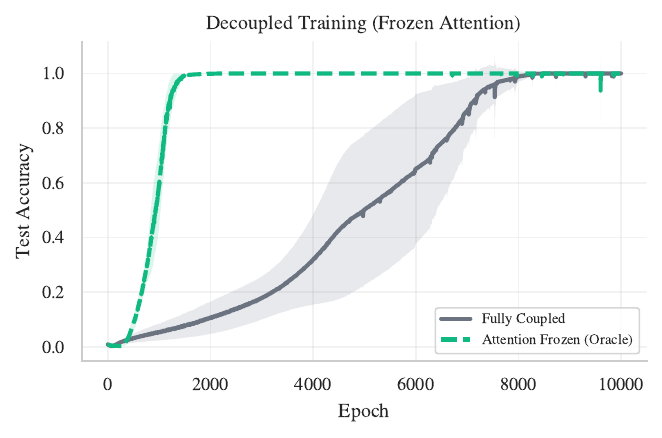}
	\includegraphics[width=0.48\textwidth]{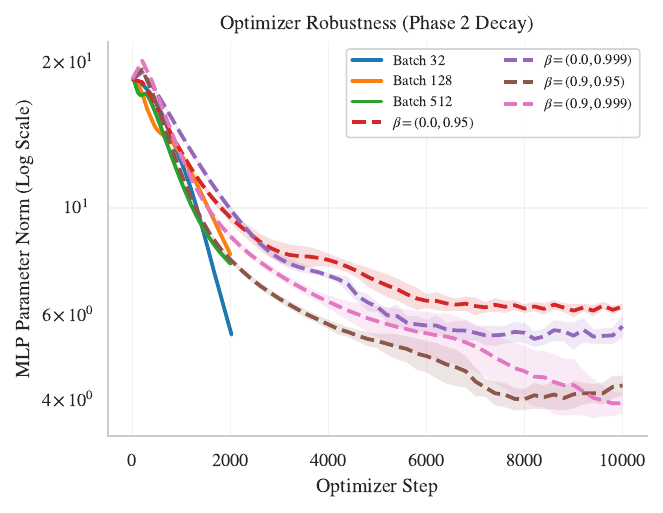}
	\caption{\textbf{Decoupled and optimizer controls.} \textbf{(Left)} Once attention is frozen to oracle routing, the downstream network generalizes far earlier than the fully coupled baseline. \textbf{(Right)} MLP norm decay persists across batch-size and Adam momentum variants, supporting the qualitative norm-contraction assumption.}
	\label{fig:app_decoupled_optimizer}
\end{figure}

The task grid in Figure~\ref{fig:app_task_diversity} is not meant to claim that every algebraic task has identical phase geometry. It shows the narrower point needed here: the attention-gradient starvation diagnostic remains compatible with the bound, and the KL structural prior accelerates the route when routing is the bottleneck. Figure~\ref{fig:app_decoupled_optimizer} then isolates the two assumptions that remain implicit in the theorem. Freezing oracle attention removes the coupled-routing bottleneck, and the remaining MLP generalizes quickly. Meanwhile, optimizer perturbations change rates and floors, but preserve the broad norm-contractive decay used in the convergence argument.

\section{Experimental Details and Reproducibility}\label{app:reproducibility}

Code for all experiments is provided in the repository. Here, we will provide the full configuration and hyperparameters used to generate the figures in the main text and appendix.

\subsection{Hardware and Software}

Experiments were executed with a single NVIDIA RTX PRO 6000 Blackwell Server Edition GPU. The code is implemented in PyTorch 2.5.1 and Python 3.12. Training was performed using Automatic Mixed Precision (AMP) and \texttt{torch.compile} for optimized performance. Full-batch training was used for all experiments to minimize stochasticity, with the exception of the Autoregressive Grokking experiments which used a batch size of 512.

\subsection{Model Architecture and Hyperparameters}

Unless otherwise specified, we used a single-layer Transformer with a single attention head and a two-layer ReLU MLP. The default hyperparameters are summarized in Table~\ref{tab:hyperparams}.

\begin{center}
	\small
	\begin{tabular}{llc}
		\toprule
		Category & Hyperparameter & Value \\
		\midrule
		\textbf{Architecture} & Number of Layers & 1 \\
		& Number of Heads (\(H\)) & 1 (4 for multi-head ablation) \\
		& Model Dimension (\(d_{\text{model}}\)) & 64 \\
		& MLP Dimension (\(d_{\text{mlp}}\)) & 256 (ratio 4) \\
		& Activation Function & ReLU \\
		& Vocabulary Size & \(p + 3\) (for modular addition) \\
		\midrule
		\textbf{Optimization} & Optimizer & AdamW \\
		& Learning Rate (\(\eta\)) & \(10^{-3}\) \\
		& Adam \(\beta_1, \beta_2\) & 0.9, 0.999 \\
		& Weight Decay (\(\lambda\)) & 1.0 \\
		& Warmup Epochs & 10 \\
		& Max Gradient Norm & 1.0 \\
		& Training Epochs & 10,000 \\
		& Batch Size & Full Batch \\
		\midrule
		\textbf{Data} & Modulus (\(p\)) & 113 \\
		& Sequence Length (\(T\)) & 5 (for modular addition) \\
		& Train Fraction & 0.3 \\
		& Number of Random Seeds & 5 (0--4) \\
		\bottomrule
	\end{tabular}
	\label{tab:hyperparams}
\end{center}

\subsection{Experiment-Specific Configurations}

Below we detail the specific variations and sweep parameters for the empirical results presented in the paper.

\paragraph{Four-Phase Dynamics (Figure~\ref{fig:four_phases}).} Trained using the baseline configuration (no KL prior, \(\beta_{\text{KL}} = 0\)). The structural task gradient norm was tracked using the Frobenius norm of the gradients with respect to the attention interaction matrix \(M\).

\paragraph{Isolating Norm from Structure (Figure~\ref{fig:decoupling}).} A factorial sweep was performed across two dimensions:
\begin{itemize}[noitemsep,topsep=0pt]
	\item \textbf{Norm Condition:} Weight Decay (\(\lambda=1.0\)) vs. No Weight Decay (\(\lambda=0.0\)).
	\item \textbf{Structural Condition:} Baseline (no prior), Oracle Prior (\(\alpha = \alpha^*\)), and Adversarial Prior (attention forced onto distractors).
\end{itemize}
Norm-matched controls used a targeted \(W_Q / W_K\) L2 penalty (\texttt{qk\_penalty}=0.5) to isolate the structural effect from weight-norm changes.

\paragraph{KL Acceleration and Scaling (Figure~\ref{fig:kl_scaling}).} The KL penalty strength \(\beta_{\text{KL}}\) was swept across the values: \(\{0.0, 0.02, 0.025, 0.033, 0.05, 0.1, 0.2, 0.5, 1.0, 5.0\}\). The grokking delay \(\Delta t_{\text{grok}}\) was defined as the first epoch where test accuracy reached \(99\%\).

\paragraph{Bayesian Ticket vs. Lottery Ticket (Figure~\ref{fig:lottery_ticket}).}
\begin{itemize}[noitemsep,topsep=0pt]
	\item \textbf{Lottery Ticket:} Models were trained for 10k epochs, pruned to the sparse structural mask, and re-initialized with the original weights.
	\item \textbf{Bayesian Ticket:} Models were initialized with fresh random weights and regularized with a constant KL prior (\(\beta_{\text{KL}}=1.0\)).
\end{itemize}

\paragraph{Appendix Experiments:}
\begin{itemize}[leftmargin=*]
	\item \textbf{Explaining-Away Bound:} Compared the empirical attention-logit gradient against \(2L_f\|W_V\|\sqrt{2\mathcal{L}_{CE}}\) throughout baseline and KL-regularized training.
	\item \textbf{Structural Priors:} Compared the oracle KL prior against entropy minimization, \(\ell_1\) attention sparsity, teacher-student structural distillation, KL annealing, and late-start KL injection.
	\item \textbf{Task Diversity:} Validated across Modular Addition (\(p=113\)), Sparse Parity (\(n=20, k=3\)), and Permutation Composition (\(S_5\)).
	\item \textbf{Autoregressive Grokking:} Next-token prediction on \(k\)-sparse parity with \(n=20\) and \(k=3\). Used a batch size of 512.
	\item \textbf{Multi-Head Dynamics:} 4-head attention where the informative signal was distributed across heads. Evaluated using a query index of 4 and informative positions \([1, 3]\).
	\item \textbf{Dimension Ablation:} Varied \(d_{\text{model}} \in \{32, 64, 128\}\) to observe subspace incoherence effects.
	\item \textbf{Length Generalization:} Trained on sequence length 6 and evaluated OOD on lengths \(\{10, 20, 50\}\).
	\item \textbf{Dynamic Routing:} Evaluated modular addition where informative positions were randomized per sequence.
	\item \textbf{Optimizer Robustness:} Swept batch size and Adam momentum parameters to test whether the Phase 2 MLP norm decay survives stochastic and optimizer-level perturbations.
\end{itemize}

\end{document}